\newcommand{\field}[1]{\mathbb{#1}}
\newcommand{\R}{\field{R}} 
\DeclareMathOperator*{\argmax}{arg\,max}
\newtheorem{theorem}{Theorem}[section]
\newtheorem{proposition}[theorem]{Proposition}
\newtheorem{corollary}[theorem]{Corollary}
\newtheorem{lemma}[theorem]{Lemma}
\newtheorem{assumption}{Assumption}
\theoremstyle{definition}
\newtheorem{example}[theorem]{Example}
\newtheorem{definition}[theorem]{Definition}
\title{Decision-aid or Controller? Steering Human Decision Makers with Algorithms}
\author{Ruqing Xu$^1$, Sarah Dean$^2$}
\date{%
$^1$Cornell University, Department of Economics\\%
$^2$Cornell University, Department of Computer Science\\[2ex]%
January, 2023
}
\begin{document}

\maketitle

\begin{abstract}
Algorithms are used to aid human decision makers by making predictions and recommending decisions. Currently, these algorithms are trained to optimize prediction accuracy. What if they were optimized to control final decisions? In this paper, we study a decision-aid algorithm that learns about the human decision maker and provides ``personalized recommendations’’ to influence final decisions. We first consider fixed human decision functions which map observable features and the algorithm’s recommendations to final decisions. We characterize the conditions under which perfect control over final decisions is attainable. Under fairly general assumptions, the parameters of the human decision function can be identified from past interactions between the algorithm and the human decision maker, even when the algorithm was constrained to make truthful recommendations. We then consider a decision maker who is aware of the algorithm’s manipulation and responds strategically. By posing the setting as a variation of the cheap talk game \citep{crawford1982strategic}, we show that all equilibria are partition equilibria where only coarse information is shared: the algorithm recommends an interval containing the ideal decision. We discuss the potential applications of such algorithms and their social implications. 
\end{abstract}

\section{Introduction}

A large and growing body of empirical literature has identified inconsistencies, errors, and biases in humans when making consequential decisions \citep{kleinberg2018human, ludwig2021fragile}. \citet{mullainathan2022diagnosing} show that compared to an algorithmic model predicting heart attack risk, physicians simultaneously overtest low-risk patients and undertest high-risk patients, indicating inaccurate prediction in diagnosing. Under reasonable behavioral and econometric assumptions, \citet{rambachan2021identifying} is able to identify systematic prediction mistakes about defendants' failure-to-appear risk in at least $20\%$ of New York City judges. Judges' sentencing decisions are even influenced by irrelevant factors like recent sports-team losses \citep{eren2018emotional} or hot weather \citep{heyes2019temperature}. 

Toward the goal of helping humans make better decisions, decision-aid algorithms have been developed and employed in many consequential domains, including healthcare, criminal justice, and credit lending systems. Much technical literature in this area has focused on improving accuracy and ensuring fairness of recommended decisions. 
While this literature has made fruitful progress, good recommendations do not directly translate to good final decisions. There still exists a gap between the algorithm's recommendation and the human's ultimate decision.
As illustrated in Figure~\ref{fig:diagram}, the decision depends on how the human agent responds to the recommendation. 

\begin{figure}[b]
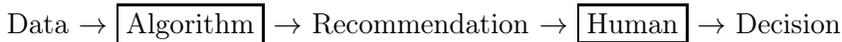

    \centering
    Data $\rightarrow$ \fbox{Algorithm} $\rightarrow$ Recommendation $\rightarrow$ \fbox{Human} $\rightarrow$ Decision
    \caption{System diagram of algorithm-aided decision process. While the algorithm provides a recommendation, the human makes the final decision.}
    \label{fig:diagram}
\end{figure}

Indeed, recent research on evaluating the outcome of such decision-aid systems has shown dispiriting results. In the medical domain, \citet{jacobs2021machine} show that receiving machine-learning recommendations did not improve clinicians' accuracy of selecting antidepressants compared to the control group in which clinicians made treatment decisions independently. Moreover, the clinicians with or without machine-learning aids both performed significantly worse than the machine-learning system alone. Similarly, in the legal domain, \citet{imai2020experimental} and \citet{stevenson2022algorithmic} study the impacts of adopting algorithmic risk assessment tools in pretrial detention and sentencing decisions respectively. Both papers find that risk assessment tools had little overall impact on the judges' final decisions, leading to no significant gain in public safety or reduced incarceration rates. 

In addition to empirical research suggesting the importance of addressing the gap between recommendations and decisions, theoretical literature on fair machine learning also recognizes the role of human intermediaries in the decision making pipeline \citep{suresh2021framework}. However, as far as we know, studies of formal models for human intermediaries in the decision pipeline are lacking. 

We present a model of the interaction between decision-aid algorithms and human intermediaries. We study a specific question: How much can an algorithm influence human decisions by recommendations alone, and what properties of human decision makers allow for such influence?
Toward that end, we first consider the case of a fixed human decision function mapping observable features and an algorithm’s recommendation to a final decision. When such a decision function is known, we characterize the conditions under which perfect control over human decisions is attainable. When the function is unknown, we develop general conditions under which it can be learned from past interactions between the algorithm and the decision maker. In the final section, we introduce a decision maker who becomes aware of algorithm's manipulations and responds strategically. We show that in equilibrium only coarse information is shared: the algorithm recommends only an interval containing the ideal decision. The decision maker makes a utility-maximizing decision on the basis of this coarse recommendation. 

Before formally introducing our model, we discuss some related work on algorithm-aided decision making.
Most closely related is the literature on human-machine complementarity. Although humans may exhibit inconsistency, error, or discrimination in decision making, they may also observe private information that is not observed by the algorithm and thus have a comparative advantage in those cases \citep{yeomans2019making, de2020case}. Therefore, this line of research has focused on designing optimal rules to combine the independent decisions of the algorithm and the judge into a joint decision. For example, \citet{RLHH22} analyzes human-machine complementarity in settings where an independent third party combines human and machine predictions. \citet{10.1145/3531146.3533221} give conditions on when complementarity is impossible and when complementarity is achievable.  

Our work differs from the complementarity literature in two significant ways. First, we focus on settings in which the human still remains the final decision maker in the system. Unlike in hybrid systems where the algorithm's prediction directly feeds into the final decision, the algorithm in our setting merely acts as an ``advisor'' and has sway over final decisions only through its recommendations. We note that this setting is arguably more common in real life, for a variety of reasons including the preservation of human agency and accountability. 
Second, our work also differs in the nature of the problem we try to solve. The complementarity literature focuses on the problem of utilizing different information observed by the algorithm and the human, while implicitly assuming the same preference of the two parties. In contrast, this paper focuses on the setting where the decision maker has fundamentally different objectives compared to the algorithm, while each party seeks to maximize their own utility dependent on the final decision.

For example, a loan manager may exhibit biases against certain racial or sexuality groups that are not aligned with the algorithm's objective of maximizing firm profits. A hiring manager may disregard affirmative action mandates by the firm in his actual practices. In both cases, a principal-agent problem arises. An algorithm that learns about the decision maker may be able to exercise the principal's objective even after going through an agent. 
In some sense, this paper investigates a ``soft'' way to implement the algorithm's decision,
approximating the replacement of a human decision maker with an algorithm.
In Section~\ref{sec:discussion}, we discuss the possible benefits and harms of such an approach.

\subsection{Vignettes}
\begin{example}[Criminal justice] In the setting of criminal sentencing, the algorithm is tasked with recommending a sentence length to the judge based on the features of the case and the defendant. The judge observes and considers some features and may or may not use the recommendation from the algorithm in making a final sentencing decision. 
\end{example}

\begin{example}[Credit lending] The algorithm takes in the features of the loan applicant (e.g., credit history, income, demographic information) and recommends a credit score to the loan manager. The loan manager then decides to accept or reject the application, and if accepted, what interest rate to charge. Suppose that the algorithm learns from the loan manager's past decisions that he is unjustifiably lenient to applicants of his own racial group. If the algorithm recommends a lower credit score for applicants of this racial group, this may offset the loan manager's preferential bias.
\end{example}

\begin{example}[Cognitive bias]
Psychological and behavioral studies have documented that humans do not usually treat probabilities linearly. A loan manager may overweigh a small probability of default. As a result, he could make overly conservative decisions systematically near the lower end of the default risk distribution. In this case, an algorithm that learns about the loan manager can adapt its recommendations to help correct this cognitive bias---for example, appropriately attenuating the default risk toward zero. 
\end{example}

\section{Problem Setting}
Throughout this paper, we will consider an ``algorithm'' that gives a recommendation about an individual to a ``judge,'' who then makes a final decision (see Figure~\ref{fig:diagram}). We use the word ``judge'' as a general name for the human decision maker who passes the final judgement, not necessarily indicating a judicial setting.
We fix one algorithm and one judge so that we may omit judge-specific subscripts.

We consider the setting where the variables of interest are continuous. The algorithm is tasked with predicting and recommending a continuous value to the judge, denoted by $n$. The judge also makes a continuous decision denoted by $y$. 
The ``ideal'' decision is also continuous and denoted by $m$.
Without loss of generality, we normalize and scale the  ideal decision, the algorithm's recommendation, and the judge's final decision to be on the same scale within the unit interval $[0,1]$.
For example, such a normalization and rescaling in the lending scenario might correspond to mapping credit scores and interest rates to repayment probabilities.
Throughout the paper, we will use the convention that higher decision values have a positive connotation.

For an individual decision subject $i$,
the algorithm makes use of high-dimensional features $x_i\in\mathcal X\subset \R^D$.
We assume that
the judge can only process (or access) a subset of these features. This gives the judge an incentive to use the algorithm, even if their objectives are not necessarily aligned. 
We define an ``attention function'' $A: \mathcal{X} \to \{0,1\}^D \odot \mathcal{X}$, where $\odot$ denotes the entry-wise multiplication of vectors.  The attention function picks out the entries of $x_i$ that impact the judge's decisions. 
We use $x_i' = A(x_i)$ to denote these features for an individual $i$, and $D'$ to denote the reduced dimension of these features.
We further suppose that the algorithm is aware of which features impact the judge's decision, i.e., that the algorithm knows $A$.

As a starting point, we suppose that the ideal decision for individual $i$ is determined by their feature so that $m_i=f(x_i)$ for a fixed function $f$.
We further suppose that the algorithm knows the function $f$, e.g., due to ample training data.
While a ``truthful'' algorithm would faithfully recommend this ideal decision to the judge,
we will consider a general recommendation 
$n_i$ which may not necessarily equate the ideal decision when the algorithm takes into account the judge's reactions.
Finally, we denote the judge's decision rule as
$y_i = g(n_i, x_i')$, which can depend on the  recommendation and the observed features.

The objective of the algorithm is to select a recommendation rule which minimizes the difference between the ideal decision and judge's final decision. 
In the first part of this paper, we investigate the design of a recommendation rule when the judges decision rule is static. 
In Section~\ref{sec:strategic}, we consider the game that arises when the judge is aware of potential manipulation by the algorithm and can change his decision rule accordingly.

 \begin{table}[h]
     \centering
     \begin{tabular}{c|ccc|c}
          symbol& meaning && symbol& meaning \\
          \hline
          $m\in[0,1]$ & ideal decision && $f:\mathcal X\to [0,1]$ & ideal prediction function\\
          $n\in[0,1]$ & recommendation && $g:[0,1]\times  A(\mathcal X)\to [0,1]$& judge's decision rule\\
           $y  \in [0,1]$ & judge's decision  && $A:\mathcal X\to \{0,1\}^D\odot\mathcal X$ & attention function
           \\
          $\mathcal X\subset \R^D$ & feature space && $x_i'=A(x_i)$ & features used by judge \\
          $x_i\in\mathcal X$ & features of individual $i$ 
     \end{tabular}
     \caption{Notation Summary}
     \label{tab:notation}
 \end{table}
 
\subsection{Linear in log odds model}\label{sec:llo}
We motivate a concrete form for a human decision function with the following observations: psychological and behavioral studies have documented that humans do not usually treat probabilities linearly. People tend to overweigh small probabilities and underweigh large probabilities \citep{kahneman_prospect_1979}. Moreover, people can be more sensitive to a change in probability near extreme values---the change from 99\% to 100\% is perceived as more substantial than the change from 33\% to 34\%.  

A probability weighting function, $w(p)$, is a function that maps the $[0,1]$ interval to itself. It represents how people perceive probabilities cognitively in making decisions. A large body of psychology research has been devoted to discern the shape and properties of the probability weighting function. One of the most consistent regularities found in empirical research is the inverse-S-shape of the function \citep{camerer1995individual}. The pioneering work in this field assumed a one-parameter weighting function that exhibits an inverse-S-shape \citep{kahneman_prospect_1979, tversky1992advances}. In a subsequent work, \citet{gonzalez1999shape} proposed the ``linear in log odds'' (LLO) weighting function that has two parameters capturing two logically independent psychological properties as follows:

\begin{align*}
    w(p)= \frac{\delta p^\gamma}{\delta p^\gamma + (1-p)^\gamma}
\end{align*}
where $\gamma$ controls ``discriminability'' (curvature) and $\delta$ controls ``attractiveness'' (elevation).

Discriminability refers to how well the decision maker can distinguish between intermediate probabilities. Consider two extreme cases: a weighting function close to a step function and a weighting function that is almost linear (see left panel of Figure~\ref{fig:lloparameter}). A child's understanding of probability could be closer to the step function---she is able to detect ``certainly will not'' and ``certainly will'' but all other levels fall into the ``maybe'' category. A decision maker with a linear weighting function is more sensitive to small changes in probabilities in most ranges, except near $0$ and $1$. For example, experts have relatively linear weighting functions when betting in their domain of knowledge.
Attractiveness represents the overall elevation of the function. If for all $p$, individual 1 perceives the probability to be larger than individual 2, i.e., $w_1(p) \geq w_2(p)$ for all $p$ with at least one strict inequality, then we say individual 1 finds the gamble more ``attactive'' than individual 2 (see right panel of Figure \ref{fig:lloparameter}).

\begin{figure}[h]
    \centering
\subfigure
{  
\pgfplotsset{xmin=0, xmax=1, ymin=0, ymax=1}    
\begin{tikzpicture}[scale=0.8]
\begin{axis}[domain=0:1,
    ylabel near ticks,
    xlabel near ticks,
    xlabel={$p$},
    ylabel={$w(p)$}]
\addplot[black,dashed,domain=0:1,smooth]  plot ({\x}, {\x });
\addplot[blue,thick,domain=0:1,smooth,samples=500]  plot ({\x}, { \x^(1/10)/(\x^(1/10) + (1-\x)^(1/10)) });
\addplot[red,thick,domain=0:1,smooth,samples=500]  plot ({\x}, { \x^(9/10)/(\x^(9/10) + (1-\x)^(9/10)) });
\node[anchor=south] at (4cm,3.5cm) {$\color{red}w_1$};
\node[anchor=north] at (5.5cm,3cm) {$\color{blue}w_2$};
\end{axis}
\end{tikzpicture}

}  
\subfigure
{  

\pgfplotsset{xmin=0, xmax=1, ymin=0, ymax=1}    
\begin{tikzpicture}[scale=0.8]
\begin{axis}[domain=0:1,
    ylabel near ticks,
    xlabel near ticks,
    xlabel={$p$},
    ylabel={$w(p)$}]
\addplot[black,dashed,domain=0:1,smooth]  plot ({\x}, {\x });
\addplot[blue,thick,domain=0:1,smooth,samples=500]  plot ({\x}, { .5*\x^(5/10)/(.5*\x^(5/10) + (1-\x)^(5/10)) });
\addplot[red,thick,domain=0:1,smooth,samples=500]  plot ({\x}, { 1.1*\x^(5/10)/(1.1*\x^(5/10) + (1-\x)^(5/10)) });
\node[anchor=south] at (4cm,3.5cm) {$\color{red}w_1$};
\node[anchor=north] at (6cm,3cm) {$\color{blue}w_2$};
\end{axis}
\end{tikzpicture}

}
    \caption{Left panel: two probability weighting functions that differ primarily in curvature ($\gamma$). Right panel: two probability weighting functions that differ primarily in elevation ($\delta$).}
    \label{fig:lloparameter}
\end{figure}
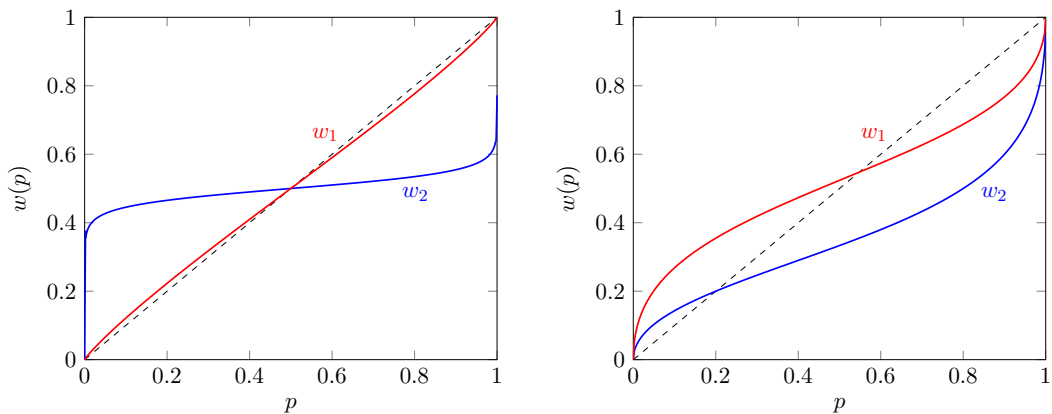

An algorithm's recommendation can be viewed as a probability in many decision-aid applications. In a pre-trial detention setting, the algorithm estimates the probability of a defendant appearing in the trial; in a lending setting, the probability of repayment; in a hiring setting, the probability an applicant is qualified. Therefore, when human decision makers process these recommendations, they could suffer from the same cognitive bias. 
An algorithm that learns about the decision maker can modify its recommendations to help correct this cognitive bias and improve the quality of decisions. 

As an ongoing example, we will consider a linear in log odds decision function. 
Given observed features $x'$, 
\begin{align}\label{eq:llo}
    g(n,x')= \frac{\delta(\beta^\top x') n^\gamma}{\delta(\beta^\top x') n^\gamma + (1-n)^\gamma},
\end{align}
where $\gamma \in [0,1]$ is a parameter that controls the overall curvature of the judge's decision function, $\beta\in\R^{D'}$ is a parameter  linearly determining the effect of observed features $x'$, and $\delta: \R \to [0,\infty)$ is a strictly increasing function. Together, $\beta$ and $\delta$ capture the differential treatment (leniency) by the judge based on observable features. For two individuals with features $x_1$ and $x_2$ suppose that $\delta(\beta^\top x_1') > \delta(\beta^\top x_2')$. Then, with the same recommendation $n$, the judge will always choose a higher (more preferential) decision for individual 1. 

\subsection{Examples}
\subsubsection{Distortion of probabilities}
Consider a loan manager deciding whether to approve loan applications. As most humans, this loan manager systematically underweighs large probabilities of paying back the loan and makes lending decisions that are too conservative for the company to maximize its profit. Suppose the manager's decision function is LLO which satisfies $\delta(\beta^\top x')=1$ and $\gamma=0.5$ for an applicant with observed features $x'$. The function $g(n,x')=\frac{n^{0.5}}{n^{0.5} + (1-n)^{0.5}}$ is plotted in the left panel of Figure~\ref{fig:lloexamples}. Using a richer set of features of the applicant, the algorithm estimates that there is a $0.8$ chance that the applicant will pay back the loan. If the algorithm gives the manager the true estimates, however, the manager will underweigh this probability according to his decision curve and output a decision corresponding to a $0.67$ chance of paying back the loan. Instead, if the algorithm has learned the decision curve of the manager, it can recommend a probability that is higher than the true estimates, $0.94$ in this case, to bring the manager's final perception back to $0.8$. 

\subsubsection{Biases based on group membership}
Consider a judge making pre-trial release decisions. Suppose this judge only observes (or cares about) the gender of the defendants. In other words, $x'$ is a scalar which equals to $1$ if the defendant is female and $0$ if male. The judge's decision functions in these two cases are depicted in the right panel of Figure \ref{fig:lloexamples}. Everything else equal, the judge is more lenient to female defendants since the same recommendation would translate to more positive decisions on the decision curve for females. We do not hypothesize on the origin of such biases---the judge could be discriminatory against certain groups, but it could also be that the judge is operating on his best knowledge of the population average of the two groups. Indeed, women were more likely to appear in court as scheduled if released \citep{steury1990gender}. 

In the presence of group-based bias, the algorithm needs to learn both curves and make recommendations to the judge differently based on group memberships. Suppose that there are two defendants who should receive the same decision, say, $0.4$, but are of different gender. In order to achieve a final decision of $0.4$ for both defendants, the algorithm needs to tune down the recommendation for the female defendant ($0.27$) but exaggerate the recommendation for the male defendant ($0.64$).

\begin{figure}[h]
    \centering
\subfigure%
{%
\pgfplotsset{xmin=0, xmax=1, ymin=0, ymax=1} 
\pgfplotsset{compat=1.17}
\begin{tikzpicture}[scale=0.8]
\begin{axis}[domain=0:1,
    ylabel near ticks,
    xlabel near ticks,
    xlabel={$n$},
    ylabel={$y$},
    clip mode=individual]
\addplot[black,dashed,domain=0:1,smooth]  plot ({\x}, {\x });
\addplot[blue,thick,domain=0:1,smooth,samples=500]  plot ({\x}, { \x^(5/10)/(\x^(5/10) + (1-\x)^(5/10)) });
\node [label={180:{0.67}},circle,fill,scale=0.5] at (0,0.67) {};
\node [label={270:{0.94}},circle,fill,scale=0.5] at (0.94,0) {};
\draw[red, loosely dotted, thick] (0.8,0)--(0.8,0.8);
\draw[red, loosely dotted, thick] (0,0.67)--(0.8,0.67);
\draw[red, loosely dotted, thick] (0,0.8)--(0.94,0.8);
\draw[red, loosely dotted, thick] (0.94,0)--(0.94,0.8);
\end{axis}
\end{tikzpicture}
}  
\subfigure%
{%
\pgfplotsset{xmin=0, xmax=1, ymin=0, ymax=1}    
\pgfplotsset{compat=1.17}
\begin{tikzpicture}[scale=0.8]
\begin{axis}[domain=0:1,
    ylabel near ticks,
    xlabel near ticks,
    xlabel={$n$},
    ylabel={$y$},
    clip mode=individual]
\addplot[black,dashed,domain=0:1,smooth]  plot ({\x}, {\x });
\addplot[blue,thick,domain=0:1,smooth,samples=500]  plot ({\x}, { .5*\x^(5/10)/(.5*\x^(5/10) + (1-\x)^(5/10)) });
\addplot[red,thick,domain=0:1,smooth,samples=500]  plot ({\x}, { 1.1*\x^(5/10)/(1.1*\x^(5/10) + (1-\x)^(5/10)) });
\node[anchor=south] at (4cm,3.5cm) {$\color{red}g(n,1)$};
\node[anchor=north] at (6cm,3cm) {$\color{blue}g(n,0)$};
\node [label={60:{0.27}},circle,fill,scale=0.5] at (0.27,0) {};
\node [label={60:{0.64}},circle,fill,scale=0.5] at (0.64,0) {};
\draw[red, loosely dotted, thick] (0.27,0)--(0.27,0.4);
\draw[red, loosely dotted, thick] (0,0.4)--(0.64,0.4);
\draw[red, loosely dotted, thick] (0.64,0)--(0.64,0.4);
\end{axis}
\end{tikzpicture}
}
    \caption{Left panel: a loan manager exhibiting distortion of probabilities. To reach a final decision of $0.8$, the algorithm should recommend $0.94$. Right panel: a judge exhibiting biases based on group membership, where the red curve $g(n,1)$ represents the judge's decision function for females and the blue curve $g(n,0)$ for males. In order to reach the same final decision of $0.4$, the algorithm needs to tune down its recommendation for females and exaggerate its recommendation for males.}
    \label{fig:lloexamples}
\end{figure}
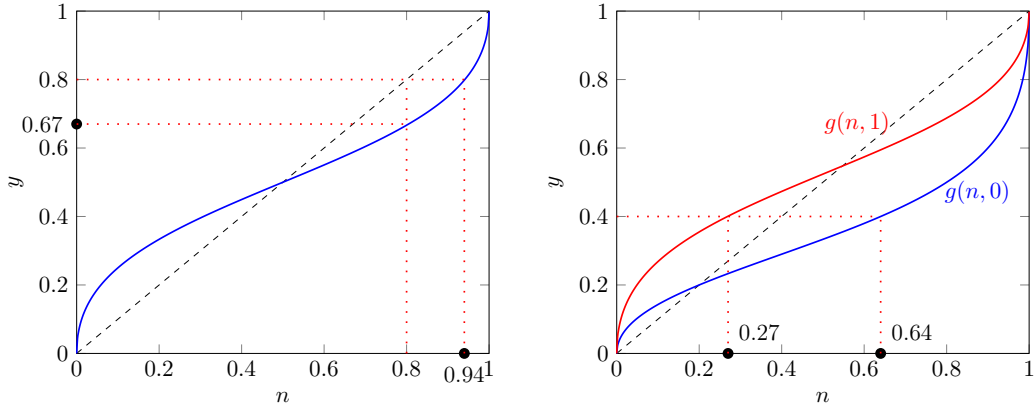

\subsection{Design choices}
\paragraph{Continuous recommendations and decisions.} 
Our model considers the setting where the ideal decision, the algorithm's recommendation, and the judge's decision all take continuous values. In many real-life applications, however, decision makers deliver binary decisions---``to release'' or ``to detain'' for the pretrial judge, ``to approve'' or ``to reject'' for the lending officer. Nonetheless, a continuous scoring coupled with a threshold rule are often underlying such decisions, and our setting still applies in those cases. Compared to the binary setting, controlling continuous decisions is a more precise task, but we also observe richer information and are able to make continuous recommendations. We mention in Section~\ref{sec:discussion} that it will be an interesting extension to study a setting where the goal is to control a binary decision, but the algorithm observes less information and faces more stringent constraints in its recommendations.

\paragraph{Choice of the LLO model.}
In the literature that follows  prospect theory, there have been various ways of modelling of the probability weighting function. We choose to base our model on \citet{gonzalez1999shape} for two reasons. First, the two-parameter model is more expressive than its one-parameter predecessors and can capture more behavioral patterns if we vary the parameters independently. Second, this model provides a psychological interpretation to the parameters, providing better intuition when we vary one of the parameters. That being said, we use the LLO model purely for expositional purposes. Our main results in controllability, identification, and simultaneous game assume no specific functional forms and thus can be applied to any functions satisfying the assumptions. 

\paragraph{Algorithm knows ideal decision without noise.}
This paper proves results about controllability and identifiability of the human decision functions in a noiseless setting where the algorithm also knows the ideal decision perfectly. We suggest considering our results as the ``fundamental limit'' of controllability and identifiability. By distinguishing between what is possible and impossible in the best case, such limits lay the foundation for understanding performance once noise or uncertainty is introduced.

\paragraph{Judge without private information.}
To some readers, the assumption that the human decision maker observes no private information may not seem very palatable. We suggest thinking about this as a setting where the human decision makers observe no more than what the algorithm observes, or where all private information of the decision maker can be fed into the algorithm. We recognize that a promising extension to this paper is to develop methods for identifying and correcting judges' biases when they do have private information. We discuss this in more detail in the future work section.

\section{Controllability}
In examples presented in Figure~\ref{fig:lloexamples}, the algorithm is able to influence the judge to make the desirable decisions (0.8 and 0.4 in the two cases) by modifying its recommendations. Indeed, the algorithm would have been able to do so for any desirable final decision in $[0,1]$. In this case, we say that the algorithm attains \emph{full control} of the judge. We will define and categorize controllability formally in this section.  

Consider general prediction and decision functions $f$ and $g$, whose values are known for any $x \in \mathcal{X}$ and $n \in [0,1]$. Like before, we also assume that the attention function $A$ is known. 

\begin{definition}[Full control]
The algorithm attains full control of the judge at $x$ if there exists an $n \in [0,1]$ such that $g(n,A(x))=f(x)$. If the algorithm attains full control of the judge at any $x \in \mathcal{X}$, then we say that the algorithm attains full control (without specifying $x$).
\end{definition}

\begin{proposition}[Conditions for full control]
\hspace{1em}
\begin{enumerate}
    \item (Full range) A sufficient condition for the algorithm to attain full control of the judge at $x$ is that the range of $g(\cdot,A(x))$ is equal to the entire $[0,1]$ interval. Mathematically,
    \begin{align*}
        [0,1]= \{y\mid y=g(n,A(x)),  n\in [0,1]\}
    \end{align*}
    \item (Limited range) A necessary and sufficient condition for the algorithm to attain full control of the judge at $x$ is that the ideal decision for $x$ is in the range of $g(\cdot,A(x))$. Mathematically,
    \begin{align*}
        f(x) \in \{y\mid y=g(n,A(x)),  n\in [0,1]\}
    \end{align*}
\end{enumerate}
\end{proposition}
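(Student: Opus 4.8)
The plan is to show that both parts are essentially immediate once we name the relevant set. Write
$\mathcal R_x := \{\, y \mid y = g(n, A(x)),\ n \in [0,1] \,\}$
for the set of final decisions the algorithm can induce at $x$ by varying its recommendation.

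For Part 2 (Limited range), I would simply unpack the definition of full control. By Definition (Full control), the algorithm attains full control at $x$ if and only if there exists $n \in [0,1]$ with $g(n, A(x)) = f(x)$; by the definition of $\mathcal R_x$, such an $n$ exists if and only if $f(x) \in \mathcal R_x$. This establishes the necessary-and-sufficient condition directly, with no additional argument needed. There is no real obstacle here: the only ``work'' is recognizing that the claimed condition is a verbatim reformulation of the definition.

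For Part 1 (Full range), I would derive it as a corollary of Part 2. Suppose $\mathcal R_x = [0,1]$. Since the ideal prediction function satisfies $f : \mathcal X \to [0,1]$, we have $f(x) \in [0,1] = \mathcal R_x$, and Part 2 then yields full control at $x$. The single point worth flagging is precisely this appeal to the standing assumption that $f$ takes values in $[0,1]$ — that is what makes surjectivity of $g(\cdot, A(x))$ onto $[0,1]$ a sufficient (but not necessary) condition. I would close with a brief remark that the converse of Part 1 fails, as already witnessed by the LLO examples of Figure~\ref{fig:lloexamples}, where $\mathcal R_x$ can be a proper subinterval of $[0,1]$ that nonetheless contains $f(x)$; this is exactly the sense in which Part 2 sharpens Part 1. (If desired, one could also note that when $g(\cdot, A(x))$ is continuous in its first argument, $\mathcal R_x$ is a subinterval of $[0,1]$, so the full-range hypothesis reduces to checking the endpoint values $g(0, A(x)) = 0$ and $g(1, A(x)) = 1$ — but this is not required for the proposition as stated.)
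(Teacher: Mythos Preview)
Your proof is correct and follows essentially the same route as the paper: establish Part~2 first by unpacking the definition of full control as membership of $f(x)$ in the range of $g(\cdot,A(x))$, then derive Part~1 as a corollary using the standing assumption $f(x)\in[0,1]$. One small correction to your optional closing remark: the LLO curves in Figure~\ref{fig:lloexamples} actually have full range $[0,1]$ (they pass through $(0,0)$ and $(1,1)$), so they do not witness failure of the converse of Part~1; the relevant illustration is the green curve $g(n,G)$ in Figure~\ref{fig:control}, where $\mathcal R_x$ is a proper subinterval that still contains the ideal decisions.
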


\begin{proof}
    We prove statement (2) first. For the sufficient direction, if $f(x)$ is in the range of $g(\cdot,A(x))$, then by the definition of range, there exists some $n \in [0,1]$ such that $g(n,A(x))=f(x)$. This is exactly the definition of attaining full control at $x$. For the necessary direction, suppose on the contrary $f(x)$ is not in the range of $g(\cdot,A(x))$. Then by the definition of range, there does not exist any $n \in [0,1]$ such that $g(n,A(x))=f(x)$, which is the negation of the definition of full control at $x$. Statement (1) follows from the sufficient direction of statement (2) since $f(x)\in [0,1]$ by our assumption. 
\end{proof}

Figure \ref{fig:control} gives a graphical intuition for different cases of controllability. Suppose that only information about an individual $i$ that the judge uses is membership in one of three different population subgroups $\{B,R,G\}$, i.e., $A(x_i) \in \{B,R,G\}$. His decision functions with respect to the subgroups are plotted in the respective color. Note that in our assumption, the ideal decision is determined by the richer feature vector $x_i$. Since different $x_i$ can give rise to the same observable feature $x_i'=A(x_i)$, the ideal decision can be different for defendants belonging to the same subgroup. In the plot, the range of ideal decisions for defendants in subgroup $R$ (resp. $G$) represented by the red (resp. green) shaded area on the y-axis. 

The blue curve represents the case of full control with full range. Since the range of the function $g(n,B)$ is the entire $[0,1]$ interval, any ideal decision falls into the range of $g(n,B)$ and therefore can be achieved by some recommendation. 

The green curve represents the case of full control with limited range. Although the range of judge's decision function $g(n,G)$ is limited, it still covers the possible range of ideal decisions for all the $x_i$ that give the observable feature $G$. The judge is still controllable because any ideal decision regarding this group will not be in the range that the judge refuses to go to. For example, suppose that the ideal decision for any murder defendant, irrespective of other features, is a harsh sentence. Then a judge that only observes the crime type and considers only harsh sentences for murder defendants will agree with the algorithm. 

On the other hand, the red curve illustrates the case when the algorithm fails to attain full control. The judge operates according to decision function $g(n,R)$, but the range of ideal decisions falls outside of the range of function $g(n,R)$. For example, if the ideal decision is $0.8$ for an individual that the judge observes as $R$, then there is no recommendation that the algorithm could give to result in this decision. 

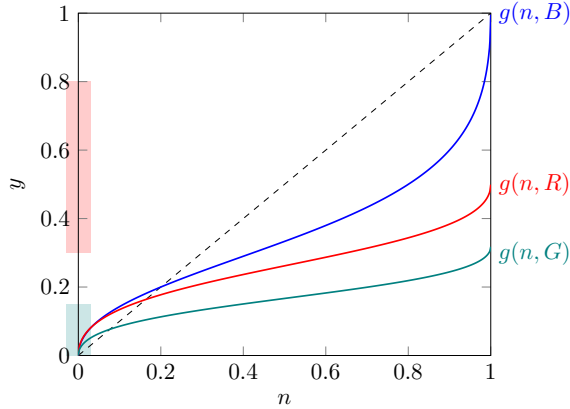
\begin{figure}
    \centering
\pgfplotsset{xmin=0, xmax=1, ymin=0, ymax=1}    
\pgfplotsset{compat=1.17}
\begin{tikzpicture}[scale=0.8]
\begin{axis}[domain=0:1,
    ylabel near ticks,
    xlabel near ticks,
    xlabel={$n$},
    ylabel={$y$},
    clip mode=individual]
\addplot[black,dashed,domain=0:1,smooth]  plot ({\x}, {\x });
\addplot[blue,thick,domain=0:1,smooth,samples=500]  plot ({\x}, { .5*\x^(5/10)/(.5*\x^(5/10) + (1-\x)^(5/10)) });
\addplot[red,thick,domain=0:1,smooth,samples=500]  plot ({\x}, { 0.55*\x^(5/10)/(1.1*\x^(5/10) + (1-\x)^(5/10)) });
\addplot[teal,thick,domain=0:1,smooth,samples=500]  plot ({\x}, { 0.35*\x^(5/10)/(1.1*\x^(5/10) + (1-\x)^(5/10)) });
\begin{pgfonlayer}{background}
  \fill[color=red!20] (axis cs:-0.03,0.3) rectangle (axis cs:0.03,0.8);
  \fill[color=teal!20] (axis cs:-0.03,0) rectangle (axis cs:0.03,0.15);
\end{pgfonlayer}
\node[anchor=west] at (1,1) {$\color{blue}g(n,B)$};
\node[anchor=west] at (1,0.5) {$\color{red}g(n,R)$};
\node[anchor=west] at (1,0.3) {$\color{teal}g(n,G)$};
\end{axis}
\end{tikzpicture}
    \caption{Examples where the algorithm can achieve full control with full range, full control with limited range, or fails to control. The blue, red, and green curves represent the judge's decision functions when observing the features $B$, $R$, and $G$ respectively. The shaded red area on the y-axis is the set of ideal decisions for defendants in group $R$, $\{f(x_i) \mid A(x_i) = R\}$. The shaded green area is the same set for defendants in group $G$, $\{f(x_i) \mid A(x_i) = G\}$. }
    \label{fig:control}
\end{figure}

\subsection{The LLO model: full control}
In this section, we show that the LLO model as shown in equation \eqref{eq:llo} satisfies the full control condition. Moreover, there is a unique recommendation $n$ for every $x$ that produces the ideal decision. 

\begin{definition}\label{def:gxi}
    We define a univariate function $g_{x'}(n): [0,1] \to [0,1]$ such that $g_{x'}(n) = g(n,x')$ for given $x'$. 
\end{definition}

For the LLO model, recall that
    \begin{align*}
         g_{x'}(n) = \frac{\delta(\beta^\top x') n^\gamma}{\delta(\beta^\top x') n^\gamma + (1-n)^\gamma}\:.
    \end{align*}

\begin{proposition}[Full control in the LLO model] \label{fullcontrolllo}
    The algorithm attains full control of the decision maker in the LLO model. Moreover, the recommendation that attains control is unique and equals $g_{x'}^{-1}(f(x))$.
\end{proposition}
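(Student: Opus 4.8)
The plan is to apply Proposition 2.4 (conditions for full control) by showing that for the LLO model, the range of $g_{x'}(\cdot)$ is the entire interval $[0,1]$, which is the sufficient "full range" condition. Concretely, I would first establish that $g_{x'}$ is continuous on $[0,1]$ and evaluate it at the endpoints: $g_{x'}(0) = 0$ and $g_{x'}(1) = 1$, since at $n=0$ the numerator $\delta(\beta^\top x')\, n^\gamma$ vanishes (using $\gamma \in [0,1]$, and noting $\gamma = 0$ needs a small separate remark or is excluded), while at $n=1$ the term $(1-n)^\gamma$ vanishes. By the intermediate value theorem, the range of $g_{x'}$ is all of $[0,1]$, so in particular $f(x) \in [0,1]$ lies in the range, and full control at every $x$ follows.

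For the uniqueness claim, I would show that $g_{x'}$ is strictly increasing on $[0,1]$, so that it is a bijection from $[0,1]$ onto $[0,1]$ and the preimage $g_{x'}^{-1}(f(x))$ is a single well-defined point. The cleanest route is to rewrite $g_{x'}(n)$ in "log-odds" form: letting $c = \delta(\beta^\top x') > 0$, one has
\begin{align*}
    \frac{g_{x'}(n)}{1 - g_{x'}(n)} = c \left(\frac{n}{1-n}\right)^{\gamma}.
\end{align*}
The map $n \mapsto n/(1-n)$ is strictly increasing from $[0,1)$ onto $[0,\infty)$, raising to the power $\gamma \in (0,1]$ preserves monotonicity, multiplying by $c>0$ preserves it, and $t \mapsto t/(1+t)$ (the inverse of $y \mapsto y/(1-y)$) is strictly increasing; composing these gives that $g_{x'}$ is strictly increasing. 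Hence it is injective, and combined with the surjectivity from the first part, it is a continuous bijection of $[0,1]$, so $g_{x'}^{-1}$ is well-defined and $n = g_{x'}^{-1}(f(x))$ is the unique recommendation achieving $g_{x'}(n) = f(x)$.

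The main obstacle — really just a point requiring care rather than a genuine difficulty — is the boundary behavior when $\gamma = 0$: then $g_{x'}(n) = c/(c+1)$ is constant, the range collapses to a point, and neither full control nor uniqueness holds. I would handle this either by noting the statement implicitly assumes $\gamma \in (0,1]$ (the degenerate "child" case $\gamma=0$ being excluded as not a sensible decision function), or by stating the result for $\gamma \in (0,1]$ explicitly. A secondary minor point is justifying continuity and the endpoint limits of $n^\gamma$ and $(1-n)^\gamma$ at $n = 0,1$, which is routine. Everything else is a direct invocation of Proposition 2.4(1) together with elementary monotonicity of the composed maps.
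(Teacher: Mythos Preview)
Your proposal is correct and follows essentially the same route as the paper: evaluate $g_{x'}$ at the endpoints $0$ and $1$, invoke the intermediate value theorem for existence, and use strict monotonicity of $g_{x'}$ to conclude uniqueness and write the controlling recommendation as $g_{x'}^{-1}(f(x))$. Your treatment is in fact more careful than the paper's (you justify monotonicity via the log-odds decomposition and flag the degenerate $\gamma=0$ case, which the paper glosses over), but the underlying argument is the same.
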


\begin{proof}

For the LLO model, the condition for full control is:
    \begin{align*}
        \forall x \in \mathcal{X}, \; \exists n \in (0,1) \text{  s.t. }  \frac{\delta(\beta^\top x') n^\gamma}{\delta(\beta^\top x') n^\gamma + (1-n)^\gamma}=f(x') .
    \end{align*}
    
For any given $x$, the right hand side is a fixed number in $[0,1]$. On the left hand side, we have $g_{x'}(0)=0$ and $g_{x'}(1)=1$, and $g_{x'}(n)$ is continuous in $n$ in $[0,1]$. By the intermediate value theorem, there exists $n\in [0,1]$ such that $g_{x'}(n)=f(x)$. Existence is established. If we further use the fact that $g_{x'}(n)$ is strictly increasing in $[0,1]$, then $g_{x'}(n)$ is invertible, this $n$ is unique and equals $g_{x'}^{-1}(f(x))$.
\end{proof}

\section{Parameter Identification}
In the previous section, the algorithm's ability to control the judge rests on one crucial assumption---the decision rule of the judge is known to the algorithm. In this part, we explore when the algorithm can learn about judge's decision function from past interactions with the judge.

We study the case in which the decision function of the judge takes a particular parametric form.
In particular, we suppose that it is determined by the parameters $\beta\in\mathbb R^{D'}$ and $\gamma\in\mathbb R$:
\newcommand{\g}{h}
\[g(n, x') = \g(\beta^\top x', \gamma, n)\]
where $\g:\mathbb R^3 \to [0,1]$ is a nonlinear function whose form is known.
The first argument of $\g$ represents the contribution of the observed features $x'$.
This contribution depends on an unknown parameter $\beta$.
The second argument of $\g$ is an unknown scalar parameter $\gamma$.
The final argument of $\g$ is the algorithm's recommendation, which is known.
In the following section, we use the LLO model as an example of such a function $\g$.

The parameters are unknown, but the judge's decisions, features, and algorithm recommendations can be observed. 
The dataset has the form $\{x_i, x_i', n_i, y_i\}_{i=1}^N$ where $y_i = \g(\beta^\top x_i', \gamma, n_i)$.
We assume that  $\beta\in\mathcal B$ and $\gamma\in\mathcal G$ which are simply connected and compact subsets of $\mathbb{R}^{D'}$ and $\mathbb{R}$ respectively.
We make the following assumptions about $\g$.
\begin{assumption}
    The function $\g$ is proper and differentiable with respect to its first and second argument.
    We denote the partial derivative with respect to the first argument
    and with respect to the second argument as
    $$\frac{\partial }{\partial u} \g(u, \gamma, n) =: \g'(u, \gamma, n)\quad\text{and}
    \quad\frac{\partial }{\partial \gamma} \g(u, \gamma, n) =: \g_\gamma(u, \gamma, n).$$
    We further assume that $\g$ is strictly increasing with respect to its first argument, i.e. $\g'(u, \gamma, n)>0$ for all values of $u,\gamma,n$.
\end{assumption}
The assumption that $h$ is strictly increasing in its first argument corresponds to the inner product $\beta^\top x'$ being positively correlated with more positive decisions.
This aligns with the discussion of the LLO model in Section~\ref{sec:llo}.


We present an identification result the develops conditions on the observed data and the function $h$ under which the unknown parameters can be uniquely identified.
This result is a special case of a more general result that we prove in Appendix~\ref{app:identification}.

\newcommand{\rank}{\mathsf{rank}}
\newcommand{\colspace}{\mathcal{C}}
\begin{proposition}\label{prop:identify}
    Define the feature matrix and vector of derivative ratios as
    \begin{align}
        \label{eq:identification-mx}
        X = \begin{bmatrix}
         x_1'^\top \\ \vdots \\   x_N'^\top \end{bmatrix}
        \quad\text{and}\quad
        d_{\beta,\gamma} = \begin{bmatrix}
            \g_\gamma(\beta^\top x_1', \gamma, n_1)/\g'(\beta^\top x_1', \gamma, n_1)\\
            \vdots \\ 
            \g_\gamma(\beta^\top x_N', \gamma, n_N)/\g'(\beta^\top x_N', \gamma, n_N)
        \end{bmatrix}
    \end{align}
    Then the parameters $\beta$ and $\gamma$ can be uniquely identified from the sample if the following two conditions hold:
    \begin{itemize}
        \item Rank condition: $X$ is full rank, i.e. $\rank(X) = D$, and $d_{\beta,\gamma}$ is nonzero for all $\beta\in\mathcal B$ and $\gamma\in\mathcal G$.
        \item Independence condition: $ d_{\beta,\gamma}$ is not contained within column space of $X$, i.e. $ d_{\beta,\gamma}\notin \colspace(X)$ for all $\beta\in\mathcal B$ and $\gamma\in\mathcal G$.
    \end{itemize}
\end{proposition}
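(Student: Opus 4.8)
The plan is to recast the proposition as an injectivity statement. Stack the model into the \emph{evaluation map} $F:\mathcal B\times\mathcal G\to\mathbb R^N$ with $F(\beta,\gamma)_i=\g(\beta^\top x_i',\gamma,n_i)$; the observed sample is then the single vector $\mathbf y=F(\beta^\star,\gamma^\star)$, and ``$\beta,\gamma$ are uniquely identified'' means exactly that $F(\beta,\gamma)=\mathbf y$ has a unique solution. So the goal is to show that the rank and independence conditions force $F$ to be injective on $\mathcal B\times\mathcal G$. I would do this in three steps: a Jacobian computation for local injectivity, an elementary argument for the case of coinciding $\gamma$'s, and a global step, which is the hard part.

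\emph{Local injectivity.} Differentiating and using the chain rule, the $i$th row of $J_F(\beta,\gamma)$ is $\bigl[\,\g'(\beta^\top x_i',\gamma,n_i)\,x_i'^\top \ \big| \ \g_\gamma(\beta^\top x_i',\gamma,n_i)\,\bigr]$, so
\[
  J_F(\beta,\gamma)=\operatorname{diag}\!\bigl(\g'(\beta^\top x_1',\gamma,n_1),\dots,\g'(\beta^\top x_N',\gamma,n_N)\bigr)\cdot\bigl[\,X \ \big| \ d_{\beta,\gamma}\,\bigr].
\]
Since $\g'>0$ everywhere (Assumption~1), the diagonal factor is invertible, so $\rank J_F(\beta,\gamma)=\rank[X\mid d_{\beta,\gamma}]$; and $[X\mid d_{\beta,\gamma}]$ has full column rank $D'+1$ exactly when $X$ has full column rank and $d_{\beta,\gamma}\notin\colspace(X)$, i.e.\ precisely under the two stated conditions (the independence condition already subsumes $d_{\beta,\gamma}\neq0$, since $0\in\colspace(X)$). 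Hence $F$ is an immersion at every point of $\mathcal B\times\mathcal G$, so it is locally injective everywhere.

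\emph{Coinciding $\gamma$.} If $(\beta,\gamma)$ and $(\tilde\beta,\gamma)$ both reproduce the data, then $\g(\beta^\top x_i',\gamma,n_i)=\g(\tilde\beta^\top x_i',\gamma,n_i)$ for all $i$, and strict monotonicity of $\g$ in its first argument yields $\beta^\top x_i'=\tilde\beta^\top x_i'$ for all $i$, i.e.\ $X(\beta-\tilde\beta)=0$; the rank condition then gives $\beta=\tilde\beta$. So identification can fail only through two solutions with distinct $\gamma$'s.

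\emph{Global step (the obstacle).} To exclude a second solution with $\gamma\neq\tilde\gamma$, I would pass to a path in parameter space. Strict monotonicity of $\g$ in its first argument together with properness lets the implicit function theorem define, for each $i$, a $C^1$ curve $\gamma\mapsto u_i(\gamma)$ on $\mathcal G$ with $\g(u_i(\gamma),\gamma,n_i)=y_i$ and $u_i'=-\g_\gamma/\g'$ evaluated along the curve; every solution $(\beta,\gamma)$ must satisfy $X\beta=(u_i(\gamma))_i$, and at such a point the derivative of $\gamma\mapsto(u_i(\gamma))_i$ equals $-d_{\beta,\gamma}$, which the independence condition keeps out of $\colspace(X)$. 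This already shows the curve $(u_i(\gamma))_i$ can meet $\colspace(X)$ only at isolated $\gamma$'s, so solutions are isolated and, by compactness, finite in number. The genuine difficulty is upgrading ``finitely many solutions'' to ``one solution'': a convex-type combination of vectors that each avoid $\colspace(X)$ need not avoid it once $\colspace(X)$ has codimension larger than one, so no naive integration along the path closes the gap. The clean resolution is a global-inverse / monodromy argument that uses properness of $\g$ together with the compactness and simple-connectedness of $\mathcal B\times\mathcal G$ to transport the local uniqueness globally; this is exactly the content of the general result in Appendix~\ref{app:identification}, and the present proposition follows by checking that the rank and independence conditions specialize its hypotheses.
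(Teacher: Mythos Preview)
Your proposal is correct and lands on essentially the same argument as the paper: factor the Jacobian of the evaluation map as $\operatorname{diag}(h')\cdot[\,X\mid d_{\beta,\gamma}\,]$, observe it has full column rank exactly under the stated conditions, and then invoke a Hadamard-type global inverse theorem (proper map, nowhere-vanishing Jacobian, simply connected parameter space) to pass from local to global injectivity. Your detours through the coinciding-$\gamma$ case and the path argument are correct but unnecessary once you defer to the Appendix result, which is precisely what the paper itself does.
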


The feature matrix $X$ is familiar from linear parameter identification,
as is the rank condition on it.
The vector of derivative ratios $d_{\beta, \gamma}$ captures a notion of variation in the nonlinear decision function over the data samples.
The condition $d_{\beta, \gamma}\neq 0$ is equivalent to requiring that the derivative with respect to $\gamma$ is nonzero for at least one datapoint.
The independence condition captures the idea that we must observe variation in the sensitivity of the decision function to  $\beta$ and $\gamma$ that is not due to variation in observed features $x_i'$. Such variation can arise due to the variation in recommendations $n_i$. In the following subsection we present a sufficient condition for the LLO model that further elucidates this requirement.




\subsection{The LLO model: identification}

The general results above apply to the LLO model and can be specialized to build further intuition about the required conditions.
The following corollary is a sufficient (but not necessary) condition for identifiability.

\begin{corollary}
Suppose that the judge's decisions are determined by the LLO model~\ref{eq:llo} and that
$\delta(\cdot)$ is strictly increasing.
Further assume that the dataset contains two individuals $i$ and $j$ with identical observed features $x_i'=x_j'$ but different recommendations $n_i\neq n_j$.
Then the parameters $\beta$ and $\gamma$ can be uniquely identified as long as $X$ is full rank.
\end{corollary}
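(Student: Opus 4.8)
The plan is to verify the two hypotheses of Proposition~\ref{prop:identify} for the LLO function $h(u,\gamma,n)=\frac{e^u n^\gamma}{e^u n^\gamma+(1-n)^\gamma}$ (writing $u=\log\delta(\beta^\top x')$, so that $\delta(\cdot)$ strictly increasing is exactly the ``$h$ strictly increasing in its first argument'' assumption needed to apply the proposition). The rank condition on $X$ is given by hypothesis, so the real work is (i) showing $d_{\beta,\gamma}\neq 0$ for all $\beta\in\mathcal B,\gamma\in\mathcal G$ and (ii) showing $d_{\beta,\gamma}\notin\colspace(X)$. I would handle both by computing the derivative ratio $h_\gamma/h'$ explicitly and showing it depends on $n$ in a genuinely nonconstant way.

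First I would compute the two partial derivatives. With $g_{x'}(n)=\frac{e^u n^\gamma}{e^u n^\gamma+(1-n)^\gamma}$, a short calculation gives $h'=\frac{\partial g}{\partial u}=g_{x'}(n)\,(1-g_{x'}(n))$ and $h_\gamma=\frac{\partial g}{\partial\gamma}=g_{x'}(n)\,(1-g_{x'}(n))\cdot\log\!\frac{n}{1-n}$, where the second factor comes from differentiating $n^\gamma=e^{\gamma\log n}$ and $(1-n)^\gamma$. Hence the ratio collapses to the clean form
\begin{align*}
\frac{h_\gamma(\beta^\top x_i',\gamma,n_i)}{h'(\beta^\top x_i',\gamma,n_i)} = \log\frac{n_i}{1-n_i},
\end{align*}
which, crucially, is independent of $\beta$, $\gamma$, and $x_i'$ — it depends only on the recommendation $n_i$. (One should note $n_i\in(0,1)$ so this is well-defined; the boundary cases can be excluded or handled by a limiting argument since controllability already forces interior recommendations in the relevant regime.)

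Given this, both conditions become statements about the vector $\bigl(\log\frac{n_1}{1-n_1},\dots,\log\frac{n_N}{1-n_N}\bigr)^\top$. For (i), $d_{\beta,\gamma}=0$ would force every $n_i=1/2$; but we are told there exist $i,j$ with $x_i'=x_j'$ and $n_i\neq n_j$, so at least one of $\log\frac{n_i}{1-n_i},\log\frac{n_j}{1-n_j}$ is nonzero, giving $d_{\beta,\gamma}\neq 0$; and this holds uniformly in $\beta,\gamma$ since the ratio does not depend on them. For (ii), suppose $d_{\beta,\gamma}=Xc$ for some $c\in\mathbb R^{D'}$. Looking at rows $i$ and $j$: since $x_i'=x_j'$ we get $x_i'^\top c = x_j'^\top c$, hence $\log\frac{n_i}{1-n_i}=\log\frac{n_j}{1-n_j}$, i.e. $n_i=n_j$ (as $t\mapsto\log\frac{t}{1-t}$ is injective on $(0,1)$), contradicting $n_i\neq n_j$. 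So $d_{\beta,\gamma}\notin\colspace(X)$, again uniformly in $\beta,\gamma$. Then Proposition~\ref{prop:identify} applies and the parameters are identified.

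The main obstacle I anticipate is purely the derivative computation and the bookkeeping of where $u=\log\delta(\beta^\top x')$ versus $\beta^\top x'$ enters — one has to be careful that the proposition's first argument is the thing $h$ is differentiated in, and confirm that $h$ strictly increasing in that argument is equivalent to $\delta$ strictly increasing (it is, since $\delta>0$ and $t\mapsto\log t$ is increasing, and $g$ is increasing in $\delta$). Everything after the ratio simplifies to $\log\frac{n}{1-n}$ is elementary; the one subtlety worth a remark is the treatment of recommendations at the endpoints $0$ or $1$, which I would dispatch by restricting to interior recommendations (consistent with Proposition~\ref{fullcontrolllo}, where the controlling recommendation is $g_{x'}^{-1}(f(x))\in(0,1)$ whenever $f(x)\in(0,1)$).
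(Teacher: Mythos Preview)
Your overall strategy matches the paper's: compute the derivative ratio $h_\gamma/h'$, then exploit the equal rows $x_i'=x_j'$ of $X$ to show $d_{\beta,\gamma}\notin\colspace(X)$. However, your reparametrization $u=\log\delta(\beta^\top x')$ is not compatible with Proposition~\ref{prop:identify}. That proposition requires the decision rule to have the specific form $g(n,x')=h(\beta^\top x',\gamma,n)$, so the first argument of $h$ must literally be the linear combination $\beta^\top x'$, and $h'$ denotes the partial derivative with respect to that argument. Your $\tilde h(v,\gamma,n)=\frac{e^v n^\gamma}{e^v n^\gamma+(1-n)^\gamma}$ with $v=\log\delta(\beta^\top x')$ does not fit this template unless $\delta=\exp$. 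With the correct $h(u,\gamma,n)=\dfrac{\delta(u)n^\gamma}{\delta(u)n^\gamma+(1-n)^\gamma}$ and $u=\beta^\top x'$ one obtains
\[
h'(u,\gamma,n)=\frac{\delta'(u)\,n^\gamma(1-n)^\gamma}{(\delta(u)n^\gamma+(1-n)^\gamma)^2},
\qquad
\frac{h_\gamma}{h'}=\frac{\delta(u)}{\delta'(u)}\,\log\frac{n}{1-n},
\]
which \emph{does} depend on $\beta$ and $x'$ through $u=\beta^\top x'$. Your assertion that the ratio is ``independent of $\beta$, $\gamma$, and $x_i'$'' is therefore false for general $\delta$; it is only independent of $\gamma$.

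Fortunately, the remainder of your argument survives this correction essentially unchanged. Since $x_i'=x_j'$, the factor $\delta(\beta^\top x_i')/\delta'(\beta^\top x_i')$ is the same at rows $i$ and $j$, and since $t\mapsto\log\frac{t}{1-t}$ is strictly monotone while $n_i\neq n_j$, the $i$th and $j$th entries of $d_{\beta,\gamma}$ still differ (for every $\beta,\gamma$). Any vector in $\colspace(X)$ has identical $i$th and $j$th entries, so $d_{\beta,\gamma}\notin\colspace(X)$; and $d_{\beta,\gamma}\neq 0$ since at least one of those two entries is nonzero. This is exactly how the paper argues.
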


We remark on this condition.
One way to achieve it is for the algorithm to sometimes inject noise into its recommendations.
However, it may not be realistic or desirable to allow for such ``exploration,'' potentially at the expense of accuracy for particular decisions.
Thus, we can consider the case that
$n_i = f(x_i)$, i.e., the algorithm is truthful in the period when we collect the training data. 
If the judge uses a strict subset of the full features so that $x_i' \neq x_i$, it may very well be that the recommendations differ $n_i\neq n_j$ (since the full features differ $x_i\neq x_j$) while the observed features $x_i'=x_j'$ are the same. 
For example, if a lending officer only observes an applicant's gender and the algorithm recommends their credit scores, then it 
is only necessary that the algorithm has seen the outcomes for applicants of the same gender with different recommended credit scores.

\begin{proof}
We begin by computing partial derivatives.
Note that 
\[\g'(u, \gamma,n) = 
\frac{\delta'(u)n^\gamma  (1-n)^\gamma
}{(\delta(u)n^\gamma + (1-n)^\gamma)^2}\]
which is always strictly positive since $\delta$ is strictly increasing and thus $\delta'>0$. Also,
\[ \g_\gamma(n,u, \gamma) = 
\frac{ \delta(u)n^\gamma (1-n)^\gamma  (\log(1-n)-\log n) }{(\delta(u)n^\gamma + (1-n)^\gamma)^2}
\]
The ratio of derivatives is: 
\[(\log(1-n)-\log n)\frac{  \delta(u)}{\delta'(u)
}\]
which is independent of $\gamma$.
By assumption, the dataset contains individuals with identical observed features $x_i'=x_j'$.
Let $u_i=\beta^\top x_i'=\beta^\top x_j'$.
The corresponding indices of $d_{\beta,\gamma}$ are not equal since
\[(\log(1-n_i)-\log n_i)\frac{  \delta(u_i)}{\delta'(u_i)
}\neq (\log(1-n_j)-\log n_j) \frac{ \delta(u_i)}{\delta'(u_i)
}\]
where the inequality holds because the function $\log(1-n)-\log(n)$ is strictly monotone in $n$ and the recommendations are different, i.e., $n_i\neq n_j$.

Finally notice that because the observed features are the same, the rows $i$ and $j$ of $X$ are identical. Thus, for any element of the column space of $X$, it must be that the value at indices $i$ and $j$ are identical. Therefore,
$d_{\gamma,\beta}\notin \mathcal C(X)$ for all $\beta$ and $\gamma$ and Proposition~\ref{prop:identify} applies.
\end{proof}

\section{Game with a Strategic Judge}\label{sec:strategic}
In the previous sections, we have considered the feasibility of learning and controlling a naïve judge---a judge who follows a fixed decision rule and is not aware of the algorithm's manipulations. This section introduces a strategic judge who is able to change his decision rule to best respond to the algorithm.

\subsection{Game set up}
We now consider the algorithm and judge as two players. 
As before, for each individual decision subject $i$, the algorithm observes a high-dimensional feature vector $x_i$
while the judge only observes less informative features $x_i'$.

While we previously considered $m_i$ to denote the ideal decision for individual $i$, 
we now explicitly distinguish between
information and action in order to consider a judge and an algorithm with different objectives.
We now refer to $m_i$ to be the ``true state'' of individual $i$ and as before
assume that the algorithm knows the function $f:\mathcal{X} \to [0,1]$  which gives
$m_i=f(x_i)$. 
The judge does not observe $m$ but has a prior on the distribution of the states with density $\mu(m)$.

Based on the true state $m$, 
the ideal decision (in the eyes of the algorithm) is determined by
a twice continuously differentiable utility function $U^A(y,m)$, where as before $y \in [0,1]$ is the final decision made by the judge upon receiving the algorithm's recommendation.
The preferred decision of the judge depends on the true state in addition to observed features $x'$.
We define a scalar ``bias'' parameter $b_i = b(x_i')$ which measures how closely the two players' interests coincide. 
Then, the judge has a twice continuously differentiable utility function $U^J(y,m,b)$. 


\begin{assumption}[Utility functions]\label{assum:utilityfunc}
    Denoting partial derivatives by subscripts in the usual way, we assume that, for each $m$,
    \begin{enumerate}
        \item For $i=A, J$, $U^i_{11}(\cdot)<0$. $U^i$ has a unique maximum in $y$ for each given $(m,b)$ pair.
        \item For $i=A, J$, $U^i_{12}(\cdot)>0$. Both players prefer a higher decision when the true state is higher. 
        \item $U^J(y,m,0) = U^A(y,m)$, $b \geq 0$, and $U^J_{13}(\cdot)>0$ everywhere. The utility-maximizing decision of the judge is weakly higher than that of the algorithm for any $m$, and an increase in $b$ shifts the judge's preference away from the algorithm's everywhere. 
    \end{enumerate}
\end{assumption}

The game proceeds as follows. When an individual $i$ arrives, the algorithm observes features $x_i$ and thus the true state of the world $m_i$. The judge observes partial features $x_i'$ and thus his bias $b_i$. The algorithm sends a recommendation $n_i \in [0,1]$ to the judge according to some recommendation rule $q_{x_i'}(n\mid m)$. The judge processes the information in this recommendation and makes a decision $y$ according to some decision rule $g_{x_i'}(n)$. The judge's decision affects both players' payoffs. 

\begin{assumption}[Information Structure]
\hspace{1em}
\begin{enumerate}
    \item We assume that the judge does not know the true state $m$ and infers it in a Bayesian way from the recommendation $n$ he receives.\footnote{We assume that the judge does not try to update his prior about $m$ from $x'$. 
    There are two motivations for this assumption: 1) the judge does not know $f$ and therefore observed features are not useful to him for determining the state of the world.
    2) the full features $x$ are much richer compared to $x'$ so that $x'$ alone does not give information about $m$. In other words, $m$ is independent of $x'$.}
    \item The algorithm observes both $x$ and $x'$ so the bias $b$ is common knowledge.
\end{enumerate}
\end{assumption}

\paragraph{Remark.} Since the bias parameter $b_i$ parameterizes the judge's utility function and is determined by the observed features $x_i'$ (e.g. a demographic category), each $x_i'$ induces a game specific to the group of individuals with the same $x_i'$. To characterize equilibria, we only need to analyze one of such games. We drop the subscript $i$ and $x_i'$ on the strategies of the players whenever it is clear that we are analyzing one specific game. For the same reason, restricting $b\geq 0$ in Assumption~\ref{assum:utilityfunc} is without loss of generality, because if it were instead $b(x_i')<0$ (an overall negative bias of the judge), the analysis below could be repeated for that group with reversed signs. 

\subsection{The LLO model as a sequential game with alternate movements}
To build intuition for the behavior of a strategic judge, in this section we
adapt the LLO model and the controllability result from the previous sections. Before defining the utility functions for the judge and the algorithm, we first define a judge specific ideal decision function. 

\begin{definition}
   The ideal decision for the judge with bias $b$ is a function $g_b(m):[0,1]\to[0,1]$ mapping from a true state of the world to a decision. It represents the utility-maximizing decision of the judge if he knows the true state of the world. 
\end{definition}

Comparing the above to Definition~\ref{def:gxi}, we now perceive the judge as having an ideal decision depending on the true state of the world, rather than a static decision function depending on a recommendation.
Additionally, in this
notation we let the function directly depend on the ``bias'' parameter $b$ (rather than indirectly on the observed features). 

In this section, we assume that the ideal decision function of the judge follows the LLO form with two additional constraints: $\gamma=1$ and $\delta(\beta^\top x_i') \in [1,\infty)$. We define $b=log(\delta)>0$. Then the form of the judge's ideal decision function is:
\begin{align}\label{eq:llogame}
    g_b(m)= \frac{e^b m}{e^b m + (1-m)} =\frac{\delta(\beta^T x_i') m}{\delta(\beta^T x_i') m + (1-m)} 
\end{align}
The curvature is fixed at $1$ while the elevation of the curve is controlled by the bias of the judge, which depends on the defendant's features he observes. The resulting shape of the function is plotted in the left panel of Figure \ref{fig:game}. When $b=0$ (i.e., $\delta=1$), the function is the identity line. When $b$ increases, the curve is elevated in all sections except for the end points $0$ and $1$. 

We discuss the interpretation of this model. Fixing $\gamma$ at $1$ removes the inverse-S-shape of the function, reflecting the different nature of biases in the strategic setting. The judge no longer misinterprets probability; instead, he consciously desires a different outcome for individuals based on their features $x'$. Therefore, it is reasonable for this desire to be in the same direction for all subjects with the same features, regardless of their true states. For example, if a loan officer believes that ``women should get more loans,'' then his ideal decisions will be more positive for women compared to men for any value of the true state. 

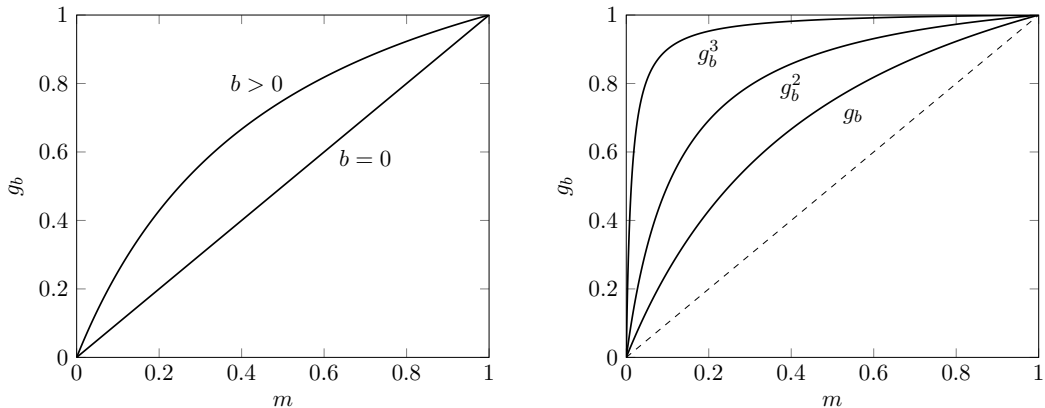
\begin{figure}[h]
    \centering
\subfigure
{  
\pgfplotsset{xmin=0, xmax=1, ymin=0, ymax=1} 
\pgfplotsset{compat=1.17}
\begin{tikzpicture}[scale=0.8]
\begin{axis}[domain=0:1,
    ylabel near ticks,
    xlabel near ticks,
    xlabel={$m$},
    ylabel={$g_b$},
    clip mode=individual]
\addplot[black,thick,domain=0:1,smooth]  plot ({\x}, {\x });
\addplot[black,thick,domain=0:1,smooth,samples=500]  plot ({\x}, { 3*\x/(3*\x + (1-\x)) });
\node[anchor=north] at (0.7,0.63) {$\color{black}b=0$};
\node[anchor=east] at (0.52,0.8) {$\color{black}b>0$};
\end{axis}
\end{tikzpicture}
}  
\subfigure
{  

\pgfplotsset{xmin=0, xmax=1, ymin=0, ymax=1}    
\pgfplotsset{compat=1.17}
\begin{tikzpicture}[scale=0.8]
\begin{axis}[domain=0:1,
    ylabel near ticks,
    xlabel near ticks,
    xlabel={$m$},
    ylabel={$g_b$},
    clip mode=individual]
\addplot[black,dashed,domain=0:1,smooth]  plot ({\x}, {\x });
\addplot[black,thick,domain=0:1,smooth,samples=500]  plot ({\x}, { 3*\x/(3*\x + (1-\x)) });
\addplot[black,thick,domain=0:1,smooth,samples=500]  plot ({\x}, { 9*\x/(9*\x + (1-\x)) });
\addplot[black,thick,domain=0:1,smooth,samples=500]  plot ({\x}, { 81*\x/(81*\x + (1-\x)) });
\node[anchor=north] at (0.55,0.75) {$\color{black}g_b$};
\node[anchor=north] at (0.4,0.85) {$\color{black}g_b^2$};
\node[anchor=north] at (0.2,0.95) {$\color{black}g_b^3$};
\end{axis}
\end{tikzpicture}
}
    \caption{Left panel: The ideal decision function of the judge when $b=0$ and $b>0$. Right panel: The shape of the composition of functions for a typical decision function $g_b$.}
\label{fig:game}
\end{figure}

We are now ready to define the utility functions of the algorithm and the judge for the LLO model.

\begin{definition}
    The algorithm's utility function is $U^A(y,m) = -(y-m)^2$. The judge's utility function is $U^J(y,m,b) = -(y-g_{b}(m))^2$, where $g_b(m)$ is the judge's ideal decision function. The algorithm is best-off if the judge's decision follows the identity line while the judge is best-off when his decision follows his ideal decision curve in the LLO form. 
\end{definition}

We pose the LLO model we discussed in previous sections as a sequential game with alternate movements. This is the case when the two players move as follows:

\begin{enumerate}
    \item $T=1$, the algorithm is constrained to behave ``truthfully'', i.e., $n=m$, while the judge best responds according to his utility. 
    \item $T=2$, the judge's decision rule is fixed as in $T=1$ and the algorithm changes its recommendation rule in best response.
    \item $T=3$, the algorithm's recommendation rule is fixed as in $T=2$ and the judge changes its decision rule in best response.
    \item $T=4$, the judge's decision rule is again fixed as in $T=3$ and the algorithm changes its recommendation rule in best response, and so on and so forth.
\end{enumerate}

In other words, the algorithm and the judge take turns, reacting to the other party's decision or recommendation rule in the last period. For this example, we  assume that both the judge and the algorithm are myopic, maximizing their utility within each period that they are allowed to update. 

At $T=1$, the judge's utility is maximized by definition as the algorithm is mandated to provide the judge with the true state of the world. The judge's decision rule is simply 
$$y=g_{b}(n).$$
At $T=2$, when the algorithm is allowed to change its recommendation rule, it wants the final decision to be as close to $m$ as possible. Given the decision rule of the judge in the first period, the algorithm best responds by choosing 
\begin{align*}
    n=g_{b}^{-1}(m)
\end{align*}
which arrives at the same solution as given in Proposition \ref{fullcontrolllo}. Under this personalized recommendation, the final decision of the judge at  $T=2$ is
\begin{align*}
    y=g_{b}(n) = g_{b}(g_{b}^{-1}(m)) = m
\end{align*}
implying that the algorithm attains full control.

At $T=3$, the algorithm is fixed and the judge is allowed to change his decision rule. The best that the judge can do is deduce the value of $m$ from algorithm's recommendation and then apply his ideal decision function $g_{b}$ to $m$ to obtain his desired outcome. This can be achieved in this case since the algorithm's recommendation function $n = g_{b}^{-1}(m)$ is invertible---the recommendation and the true state of the world has a one-to-one correspondence---so the judge deduces that $m = g_{b}(n)$. Therefore, the judge's decision function is
\begin{align*}
    y=g_{b}(m) = g_{b}(g_{b}(n)) = g_{b}^2(n)
\end{align*}
The right panel of Figure \ref{fig:game} plots a typical $g_{b}$ and its compositions with itself. Note that $g_{b}^2$ is a more distorted LLO curve, i.e., further away from the identity line. 

The pattern continues in subsequent periods. At $T=4$, the algorithm recommends $n=(g_{b}^2)^{-1}(m)$ to perfectly control the judge's decision. At $T=5$, the judge adopts the decision $y=g_{b}^3(n)$ to achieve his first-best. 

At the limit, the judge's decision rule is almost a step function---choose $0$ when algorithm recommends $0$ and choose $1$ when algorithm recommends everything else. The algorithm in turn recommends $0$ for every state of the world except for the state $1$. Intuitively, if the judge starts with a positive bias, the judge ultimately becomes too lenient to almost everyone so the algorithm has to become too stringent. The result is a loss of information from the algorithm and a loss of controllability of the judge.   

\subsection{Simultaneous game}
In this section, we proceed to analyze the simultaneous game in which the algorithm and the judge best respond to each other in the same period. One insight we can glean from the sequential game is that the algorithm will not choose a one-to-one correspondence between $m=f(x)$ and $n$ in equilibrium. Otherwise, the judge can perfectly back out the true state of the world and achieve his first-best. Indeed, with reference to the classical cheap talk game in \citet{crawford1982strategic}, we show that the equilibria in our game are variations of partition equilibria where the algorithm only recommends to the judge the interval containing the true state. 

Formally, the equilibrium of this game consists of a recommendation rule for the algorithm, $q(n\mid m)$, and a decision rule for the judge, $g(n)$, such that:
\begin{enumerate}
    \item For each $m$, $\int_0^1 q(n\mid m)dn = 1$ 
    and if $n^*$ is in the support of $q(\cdot \mid m)$, then $n^*$ solves 
    $$\max_{n\in [0,1]} U^A(g(n),m).$$
    \item For each $n$, $g(n)$ solves 
    $$\max_{y} \int U^J(g(n),m,b)p(m\mid n)dm \quad 
    \text{where}\quad 
    p(m\mid n) \equiv \frac{q(n\mid m)\mu(m)}{\int q(n\mid t)\mu(t)dt}.$$
\end{enumerate} 

\begin{definition}
    For all $m \in [0,1]$, define the utility-maximizing decisions
    \begin{align*}
        y^A(m)= \argmax U^A(y,m)
        \quad \text{and}\quad
        y^J(m,b)= \argmax U^J(y,m,b)
    \end{align*}
    By Assumption~\ref{assum:utilityfunc},
    $y^A(m)$ and $y^J(m,b)$ are well defined and continuous in $m$.
\end{definition}

To make our general results applicable to the LLO model, we need to make a small adjustment to the theorems in \citet{crawford1982strategic} as follows. 

\begin{definition}
    Let $\overline{N}=\{n: g(n)=\overline{y}\}$ be the set of recommendations that could result in the decision $\overline{y}$. We say that a decision $\overline{y}$ is induced by a state $\overline{m}$ if $\int_{\overline{N}}q(n\mid \overline{m})d_n >0$. That is, there is a positive probability that the utility-maximizing recommendation for the algorithm in state $\overline{m}$ will induce the utility-maximizing decision $\overline{y}$ for the judge.
\end{definition}

\begin{lemma}[Adapted from \citet{crawford1982strategic} Lemma 1]\label{lemma1}
    If $y^A(m) \neq y^J(m,b)$ for all $m$ in the interval $[c,d] \subset [0,1]$, then there exists an $\epsilon>0$ such that if $u$ and $v$ are decisions induced in equilibrium by $m \in [c,d]$, then $|u-v|\geq \epsilon$. Further, the set of decisions induced in equilibrium by $m \in [c,d]$ is finite. 
\end{lemma}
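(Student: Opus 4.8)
My plan is to adapt the argument behind Lemma~1 of \citet{crawford1982strategic}. I would first record that on $[c,d]$ the two ideal points are strictly ordered: Assumption~\ref{assum:utilityfunc}(3) gives $U^J(\cdot,m,0)=U^A(\cdot,m)$, $b\ge 0$, and $U^J_{13}>0$, so $y^J(m,b)\ge y^A(m)$ for all $m$, and the hypothesis $y^A(m)\neq y^J(m,b)$ on $[c,d]$ sharpens this to $y^J(m,b)>y^A(m)$ there. By continuity of $y^A$ and $y^J(\cdot,b)$ and compactness of $[c,d]$ I would then set $\epsilon:=\min_{m\in[c,d]}\big(y^J(m,b)-y^A(m)\big)>0$. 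The goal becomes: any two distinct decisions $u<v$ induced in equilibrium by states in $[c,d]$ satisfy $v-u\ge\epsilon$. Finiteness is then immediate, since the induced decisions form an $\epsilon$-separated subset of $[0,1]$ and hence number at most $\lfloor 1/\epsilon\rfloor+1$.

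\textbf{Step 1: a pivotal state.} Let $u$ be induced by $m_u\in[c,d]$ and $v$ by $m_v\in[c,d]$. Since every recommendation in the support of $q(\cdot\mid m_u)$ maximizes $U^A(g(\cdot),m_u)$ over $[0,1]$ and $v=g(n'')$ for some $n''\in[0,1]$ (existing since $v$ is induced), revealed preference yields $U^A(u,m_u)\ge U^A(v,m_u)$, and symmetrically $U^A(v,m_v)\ge U^A(u,m_v)$. I would study $\phi(m):=U^A(v,m)-U^A(u,m)$, which is $C^1$ with $\phi'(m)=\int_u^v U^A_{12}(y,m)\,dy>0$ by Assumption~\ref{assum:utilityfunc}(2); hence $\phi$ is strictly increasing, $\phi(m_u)\le 0\le\phi(m_v)$ forces $m_u\le m_v$, and the intermediate value theorem produces $\hat m\in[m_u,m_v]\subseteq[c,d]$ with $U^A(u,\hat m)=U^A(v,\hat m)$. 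Strict concavity of $U^A(\cdot,\hat m)$ (Assumption~\ref{assum:utilityfunc}(1)), with peak $y^A(\hat m)$, then forces $u<y^A(\hat m)<v$: if the peak lay at or below $u$, then $U^A(\cdot,\hat m)$ would be strictly decreasing on $[u,v]$, contradicting the indifference, and symmetrically at the top.

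\textbf{Step 2: lower-bounding $v$ via the judge's posterior.} Monotonicity of $\phi$ gives $\{m:\phi(m)\ge 0\}=\{m:m\ge\hat m\}$. I would take any recommendation $n$ used in equilibrium with $g(n)=v$, so its posterior $p(\cdot\mid n)$ is well defined. Each state $m$ with $q(n\mid m)>0$ has $n$ in the support of $q(\cdot\mid m)$, hence $U^A(v,m)=U^A(g(n),m)\ge U^A(u,m)$ (comparing with any $n'$ such that $g(n')=u$), so $\phi(m)\ge 0$ and the posterior is supported on $\{m\ge\hat m\}$. Since $v$ maximizes $\int U^J(y,m,b)\,p(m\mid n)\,dm$ over $[0,1]$, if we had $v<y^J(\hat m,b)$ then strict concavity of $U^J(\cdot,\hat m,b)$ gives $U^J_1(v,\hat m,b)>0$, and $U^J_{12}>0$ (Assumption~\ref{assum:utilityfunc}(2)) gives $U^J_1(v,m,b)>0$ for all $m\ge\hat m$, whence $\int U^J_1(v,m,b)\,p(m\mid n)\,dm>0$ and $v$ could be profitably increased; the boundary case $v=1$ already satisfies $v\ge y^J(\hat m,b)$. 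Thus $v\ge y^J(\hat m,b)$, and combining with Step~1, $v-u> y^J(\hat m,b)-y^A(\hat m)\ge\epsilon$, as required.

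\textbf{Anticipated obstacle.} I expect the crux to be the posterior-support claim in Step~2 — that a recommendation inducing $v$ can only pool states lying to the right of the pivotal state $\hat m$. This is where the single-crossing structure of $U^A$ in $(y,m)$ is essential: it converts ``the algorithm weakly prefers inducing $v$ over $u$ in state $m$'' into ``$m\ge\hat m$,'' and only then does the judge's first-order condition together with $U^J_{12}>0$ pin $v$ above $y^J(\hat m,b)$. The remaining ingredients — the intermediate value theorem for $\phi$ and the betweenness facts from strict concavity — are routine, as is the final counting argument for finiteness.
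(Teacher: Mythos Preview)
Your proposal is correct and is precisely the Crawford--Sobel Lemma~1 argument restricted to $[c,d]$, which is exactly what the paper's proof invokes without spelling out the details. The only cosmetic difference is that, because here the receiver (judge) rather than the sender carries the bias, you bound the \emph{higher} induced action from below via the judge's first-order condition instead of bounding the lower one from above; this is the natural mirror image and does not constitute a different route.
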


\begin{proof}
    The proof follows the same arguments as in the proof of Lemma 1 in \citet{crawford1982strategic}, except that all statements that hold on $[0,1]$ now hold on the closed interval $[c,d]$.
\end{proof}

The lemma proves that in equilibrium, the judge would only choose from a finite set of decisions, indicating that he is not able to obtain more detailed information from the algorithm. It confirms our intuition that the equilibrium recommendation must be imprecise if the agents' interests do not coincide. We will now show that, in particular, the equilibria in this model take a simple partition form. Let us first introduce some notation for describing partition equilibria. 

\begin{definition}
    Let $(a_0,\dots,a_N)$ denote a partition of the interval $[c,d] \subset [0,1]$ with $N$ steps. The end points of those intervals are $c= a_0<\dots<a_N=d$. Define, for all $\underline{a}, \overline{a} \in [c,d]$ and $\underline{a}\leq \overline{a}$,
\begin{align*}
\overline{y}(\underline{a},\overline{a}) = 
    \begin{cases}
    \argmax \int_{\underline{a}}^{\overline{a}}U^J(y,m,b)f(m)dm & \text{if  } \underline{a} < \overline{a} \\
    y^J(\underline{a},b) & \text{if  } \underline{a} = \overline{a}
    \end{cases}
\end{align*}
\end{definition}
In words, $\overline{y}(\underline{a},\overline{a})$ is the decision that maximizes the judge's expected utility given that the true state of the world $m$ falls between the partition interval $[\underline{a},\overline{a}]$. Now, we are ready to present the proposition that characterizes the equilibria of the simultaneous game.

\begin{proposition}[Adapted from \citet{crawford1982strategic} Theorem 1]\label{prop:partitioneq}
    If $y^A(m) \neq y^J(m,b)$ for all $m$ in the interval $[c,d]\subset [0,1]$, then there exists a positive integer $N(b)$ such that, for every $N$ with $1\leq N \leq N(b)$, there exists at least one equilibrium $\{y(n),q(n\mid m)\}$, where $q(n\mid m)$ is uniform, supported on $[a_i, a_{i+1}]$ if $m\in (a_i, a_{i+1})$, 
    \begin{enumerate}
        \item $U^A(\overline{y}(a_i,a_{i+1}),a_i,b)- U^A(\overline{y}(a_{i-1},a_{i}),a_i,b) = 0$ for $i=1,\dots,N-1$,
        \item $g(n)=\overline{y}(a_i,a_{i+1})$  for all $n\in (a_i,a_{i+1})$,
        \item $a_0 = c$ and $a_N = d$.
    \end{enumerate}
    Moreover, any equilibrium is essentially equivalent to one in this class for some value of $N$ with $1\leq N \leq N(b)$.
\end{proposition}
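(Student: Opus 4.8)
The plan is to follow the proof of Theorem 1 in \citet{crawford1982strategic}, localizing every argument from $[0,1]$ to the subinterval $[c,d]$ in the same way that was done for Lemma~\ref{lemma1}. Three things must be shown: (a) every equilibrium, restricted to states in $[c,d]$, has the partition form in items 1--3; (b) for each $N$ with $1\le N\le N(b)$ such an equilibrium exists; and (c) $N(b)$ is finite and nothing outside this class arises.

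First I would pin down the structure of an arbitrary equilibrium on $[c,d]$. By Lemma~\ref{lemma1} the set of decisions induced by states in $[c,d]$ is finite, say $y_1<\dots<y_N$; let $M_k=\{m\in[c,d]: m \text{ induces } y_k\}$. The crucial step is a sorting argument: if $m<m'$, $m$ induces $y_j$ and $m'$ induces $y_k\neq y_j$, then optimality of the algorithm's recommendation in each state gives $U^A(y_j,m)\ge U^A(y_k,m)$ and $U^A(y_k,m')\ge U^A(y_j,m')$ (a recommendation inducing $y_k$ for $m'$ is also available to $m$, and conversely). Since $U^A_{11}<0$ and $U^A_{12}>0$ (Assumption~\ref{assum:utilityfunc}), $U^A(\cdot,m)$ is single-peaked with peak $y^A(m)$ strictly increasing in $m$, so these inequalities force $y_j<y_k$. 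Hence each $M_k$ is an interval up to its measure-zero endpoints, giving a partition $c=a_0<a_1<\dots<a_N=d$ with $(a_{k-1},a_k)\subseteq M_k$. On the interior of each $(a_{k-1},a_k)$ every state sends recommendations inducing the same decision, so replacing $q(\cdot\mid m)$ by the uniform density on $[a_{k-1},a_k]$ leaves the judge's posterior equal to the prior restricted to that interval and changes no payoff; the judge's best response is then $g(n)=\overline{y}(a_{k-1},a_k)$, which is item 2. Continuity of $U^A$ and $y^A$ at the boundary state $a_k$, which lies in the closure of both $M_k$ and $M_{k+1}$, yields $U^A(\overline{y}(a_k,a_{k+1}),a_k,b)=U^A(\overline{y}(a_{k-1},a_k),a_k,b)$, i.e.\ item 1 (with the convention that $U^A$ does not in fact depend on $b$); item 3 is by construction.

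For existence I would fix $a_0=c$ and show the indifference condition with $i=1$ defines $a_2$ as a continuous, strictly increasing function of $a_1$ that produces a strictly larger step, $a_2-a_1>a_1-a_0$; this monotone-step property is the technical heart, and it uses $U^A_{12}>0$, $U^J_{13}>0$ and $b>0$ (the judge's induced decision on an interval is shifted above the algorithm's ideal by an amount bounded below by the bias, so the next interval must be longer). Iterating defines a forward solution $a_2(a_1),a_3(a_1),\dots$. Finiteness of $N(b)$ is then immediate from Lemma~\ref{lemma1}: the $\epsilon$-separation of induced decisions bounds the number of partition cells uniformly. For each $N\le N(b)$, running the $N$-step forward solution as a function of $a_1$, the endpoint $a_N(a_1)$ varies continuously and sweeps past $d$, so the intermediate value theorem gives an $a_1$ with $a_N=d$; the resulting partition together with items 1--3 is an equilibrium.

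Finally, for ``essentially equivalent,'' the structural argument of the second paragraph shows any equilibrium induces a partition $c=a_0<\dots<a_N=d$ with $N\le N(b)$, and the induced decisions and posteriors---hence all payoffs and the outcome distribution---are exactly those of the constructed partition equilibrium with the same cutoffs; the only residual freedom is the payoff-irrelevant relabeling of recommendations within a cell and the assignment of the measure-zero boundary states. I expect the main obstacle to be the existence half, specifically verifying strict monotonicity of the step-size recursion and enough uniform continuity of the forward solution in $a_1$ to invoke the intermediate value theorem---the partition-structure half is essentially a transcription of \citet{crawford1982strategic} once every statement is confined to $[c,d]$.
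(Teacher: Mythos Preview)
Your proposal is correct and follows exactly the approach the paper takes: the paper's own proof is simply a one-line appeal to the argument of Theorem~1 in \citet{crawford1982strategic} with $[0,1]$ replaced by $[c,d]$, and your sketch is a faithful (and more detailed) outline of precisely that argument---the finiteness from Lemma~\ref{lemma1}, the sorting/interval structure from single-peakedness and $U^A_{12}>0$, the boundary indifference condition, and the forward-solution plus intermediate value theorem for existence. There is nothing to add beyond noting that your write-up is already more explicit than what the paper provides.
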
 

\begin{proof}
    Similarly, the proof follows essentially the same arguments as in the proof of Theorem 1 in \citet{crawford1982strategic}, except that all results that hold on $[0,1]$ now hold on the closed interval $[c,d]$.
\end{proof}

Proposition~\ref{prop:partitioneq} establishes the existence of multiple partition equilibria from size one to size $N(b)$. In each partition equilibrium, the algorithm recommends the same value for all states within a  partition interval. Condition (1) is an ``arbitrage'' condition, requiring that the choice of end points of the partition intervals must be such that the algorithm is indifferent between recommending the lower or higher interval if the true state falls at the end point. Condition (2) simply states that given the algorithm's recommendation rule, the judge chooses the decision that maximizes his expected utility over the interval that the recommendation lies in.

To give intuition to equilibria of different sizes, let us consider the two extreme cases. The equilibrium of size one is usually called the ``babbling equilibrium'' in literature, since the algorithm gives the same recommendation regardless of the true state and therefore is completely uninformative. On the other hand, the equilibrium of size $N(b)$ is the most informative equilibrium since the richest information about the state of the world is transmitted. Under our assumptions about the utility functions, it can be shown that $N(b)$ is nondecreasing in $b$, the discrepancy between the players' objectives. In other words, the best-case equilibria are more informative only if the objectives of the algorithm and the judge are more closely aligned.

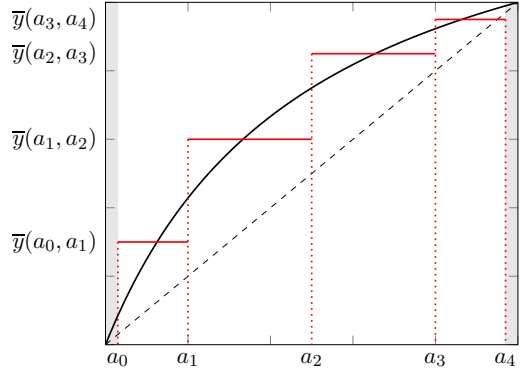
\begin{figure}[h]
    \centering
\pgfplotsset{xmin=0, xmax=1, ymin=0, ymax=1}    
\pgfplotsset{compat=1.17}
\begin{tikzpicture}[scale=0.8]
\begin{axis}[domain=0:1,
    ylabel near ticks,
    xlabel near ticks,
    xticklabels={,,},
    yticklabels={,,},
    clip mode=individual]
\addplot[black,dashed,domain=0:1,smooth]  plot ({\x}, {\x });
\addplot[black,thick,domain=0:1,smooth,samples=500]  plot ({\x}, { 3*\x/(3*\x + (1-\x)) });
\draw[red, thick] (0.03,0.3) -- (0.2,0.3);
\draw[red, thick] (0.2,0.6) -- (0.5,0.6);
\draw[red, thick] (0.5,0.85) -- (0.8,0.85);
\draw[red, thick] (0.8,0.95) -- (0.97,0.95);
\draw[red, dotted, thick] (0.03,0.3) -- (0.03,0);
\draw[red, dotted, thick] (0.2,0.6) -- (0.2,0);
\draw[red, dotted, thick] (0.5,0.85) -- (0.5,0);
\draw[red, dotted, thick] (0.8,0.95) -- (0.8,0);
\draw[red, dotted, thick] (0.97,0.95) -- (0.97,0);
\node[anchor=north] at (0.03,0) {$\color{black}a_0$};
\node[anchor=north] at (0.2,0)  {$\color{black}a_1$};
\node[anchor=north] at (0.5,0)  {$\color{black}a_2$};
\node[anchor=north] at (0.8,0)  {$\color{black}a_3$};
\node[anchor=north] at (0.97,0)  {$\color{black}a_4$};
\node[anchor=east] at (0,0.3) {$\color{black}\overline{y}(a_0,a_1)$};
\node[anchor=east] at (0,0.6) {$\color{black}\overline{y}(a_1,a_2)$};
\node[anchor=east] at (0,0.85) {$\color{black}\overline{y}(a_2,a_3)$};
\node[anchor=east] at (0,0.95) {$\color{black}\overline{y}(a_3,a_4)$};
\begin{pgfonlayer}{background}
  \fill[color=black!10] (axis cs:0,0) rectangle (axis cs:0.03,1);
  \fill[color=black!10] (axis cs:0.97,0) rectangle (axis cs:1,1);
\end{pgfonlayer}
\end{axis}
\end{tikzpicture}
    \caption{Illustrations of a partition equilibrium with four intervals. The area not grayed-out denotes the interval $[\delta,1-\delta]$ that is sufficiently away from end points. The algorithm sends the same recommendation for the states of the world in the same partition interval, and the judge chooses the decision that maximizes the expected utility knowing that the true state falls into this interval.}
    \label{fig:partitioneq}
\end{figure}

\begin{example}[LLO model in the simultaneous game]
    
One feature of the LLO model is that the curve always passes through $(0,0)$ and $(1,1)$ for any parameter value. The two end points complicate our analysis since the objectives of the algorithm and the judge always approach agreement near the end points, regardless of the value of the bias parameter. The interpretation of the end points are those ``unambiguous cases,'' in which both the algorithm and the judge converge to an extreme opinion about the individual. 

We argue that the more interesting cases are the cases sufficiently away from the end points, where the algorithm and the judge have substantial disagreements and thus have incentives to compete with each other. If we restrict our analysis to the interval $[\delta,1-\delta]$ for any $\delta>0$, we can apply Proposition~\ref{prop:partitioneq} and characterize the partition equilibria in the LLO model.  
Figure \ref{fig:partitioneq} gives an example of such an equilibrium. For a given $\delta>0$, the the interval $[\delta,1-\delta]$ is partitioned into four intervals with end points $a_0,\dots,a_4$. The algorithm gives the same recommendation as long as the true state falls within the same partition interval, say $n_1$ for $m\in [a_0,a_1]$, $n_2$ for $m\in [a_1,a_2]$, etc.\footnote{In theory, it does not matter what the content of the recommendation is as long as the two players agree on the recommendation rule. In practice, we do often see recommendations in the form of ``the true state is between $a_0$ and $a_1$.''} When the judge receives the recommendation $n_1$, he chooses $\overline{y}(a_0,a_1)$, the decision that maximizes his expected utility, conditional on knowing that the true state is somewhere in $(a_0,a_1)$.
\end{example}

\section{Discussion and future work}\label{sec:discussion}

In this paper we consider decision-aid algorithms which give recommendations to human decision makers but cannot otherwise impact the final decision.
We develop conditions under which the ``soft'' influence of recommendations is sufficient to ensure that ideal decisions are made, even in the presence of a biased judge.
Our results span two settings: first, we consider a static judge who operates according to a fixed decision rule.
In this setting, we characterize the conditions necessary for full control, when the decision function is known, and identification, when the decision function is unknown but can be learned from data.
We then consider a judge who is aware of potential manipulations by the algorithm and selects a decision rule strategically. 
We draw a connection to ``cheap talk'' games and show how misaligned objectives between the judge and the algorithm coarsen the information transmitted in equilibrium.

Our characterization highlights 
the fundamental limits of algorithmic influence in decision-aid systems.
This influence is limited when the judge and algorithm have very different objectives. 
In the static setting, controllability fails when the range of decisions acceptable to the judge does not contain the algorithm's ideal decision, indicating a fundamental conflict of objectives.  
In the strategic setting,
disagreement about the ideal decision alone is enough to cause a coarsening of the information shared, and consequently a coarsening of the final decision.
In the case of extreme disagreement, the only equilibrium would be a partition of size one (i.e., the ``babbling equilibrium''), where the algorithm sends the same recommendation regardless of the true state and the judge makes a single decision for all. 
Intuitively, if the objective of the algorithm is too different from that of the judge, the judge is more likely to ignore the algorithm despite its informational advantages.

The interpretation of these observations and their broader implications depends on
the context of the decision problem and the reasons for disagreement between the two parties.
Indeed, the spectre of algorithmic-control-via-recommendation is not entirely palatable. 
Consider an algorithm which is claimed to be a decision-aid but is in fact secretly designed to manipulate final decisions.
The human judges, on the surface, continue to have the final say over decisions and are held accountable for their decisions.
Meanwhile, the goals of the algorithm's designer are realized without any of the accountability. 
When viewed in this light, our impossibility results help to understand the extent to which we should worry about such ``accountability laundering.''



An important direction for future work is to relax the assumption that the algorithm has access to a superset of the information available to the human decision maker.
Then the problem becomes how to account for the fact that humans may observe private information but may also have biases. 
There are two potential ways to solve this problem. One possibility is for the algorithm to appropriately query the human and solicit his private information. Then the algorithm could control the final decision as in the present work with all of the information at hand. Alternatively, the algorithm may need to disentangle the biases from the private information of the human and use recommendations to only correct for the biases. 
In general, this is a difficult task requiring appropriate assumptions and bounds on judges' private information and utility functions \citep{rambachan2021identifying}.



Additionally, in either the static or strategic settings,
it would be interesting to consider either more complex, or more constrained, interactions between algorithms and decision makers. 
Empirical research has shown that the way recommendations are presented can affect people's reactions to the recommendations \citep{yeomans2019making}. 
If an algorithm could provide an explanation together with its recommendation (e.g., in the form of text or highlighting important features of an individual), how much would this impact the ability of the algorithm to persuade a judge with differing objectives?
On the other hand, it would be realistic to characterize the limitations to control that arise due to constraints on decision-aids. 
For example, the set of available recommendations may be discrete and pre-determined, in contrast to the cheap talk game where discrete recommendations emerge organically from the equilibrium. In a binary decision setting, the algorithm must learn from past binary decisions but only needs to control for a binary outcome.

Lastly, this paper does not address how a strategic judge detects the presence of manipulation, or the social welfare implications when the algorithm tries to control the final decision, under different assumptions about the response of the judge and his private information.
Such investigations may be fruitful areas for future work.





\bibliographystyle{plainnat}
\bibliography{bibliography}

\newpage
\appendix

\section{General Identification Result}\label{app:identification}

\begin{theorem}\label{thm:identify_gen}
    Consider known nonlinear and differentiable functions $h_i:\R^{1+L}\to [0,1]$.
    Suppose that for $i\in\{1,\dots, B\}$ observations are generated according to
    \[y_i = h_i(\beta^\top z_i, \theta)\]
    for known variables $z_i$ and unknown parameters $\beta\in\mathcal B$ and $\theta\in\mathcal T$, where $\mathcal B$ and $\mathcal T$ are simply connected and compact subsets of $\mathbb{R}^D$ and $\mathbb{R}^L$ respectively.

    Denote by $h_i'$ the partial derivative of $h_i$ with respect to the first argument and $\nabla_\theta h_i$ the gradient with respect to the latter $L$ arguments.
    Define the matrices 
    \begin{align}
        \label{eq:identification-mx-appendix}
        Z = \begin{bmatrix}
         z_1^\top \\ \vdots \\   z_N^\top \end{bmatrix},
        \quad
        B_{\beta,\theta} = \begin{bmatrix}
        h_1'(\beta^\top z_1, \theta) && \\ &\ddots &\\  &&h_N'(\beta^\top z_N, \theta) \end{bmatrix},
        \quad
        \Theta_{\beta,\theta} = \begin{bmatrix}
            \nabla_\theta h_1(\beta^\top z_1, \theta)\\
            \vdots \\ 
            \nabla_\theta h_N( \beta^\top z_N, \theta)
        \end{bmatrix}
    \end{align}
    Then the parameters $\beta$ and $\theta$ can be uniquely identified from a dataset of $\{z_i, y_i\}_{i=1}^N$ if the following conditions hold for all $\beta\in\mathcal B$ and $\theta\in\mathcal T$:
    \begin{itemize}
        \item Rank condition: $B_{\beta,\theta}Z$ and $\Theta_{\beta,\theta}$ are full rank, i.e. $\rank(B_{\beta,\theta}Z) = D$ and $\rank(\Theta_{\beta,\theta}) = L$ 
        \item Independence condition: the column spaces of $B_{\beta,\theta}X$ and $\Theta_{\beta,\theta}$ are perpendicular, i.e. $\colspace(B_{\beta,\theta}X) \perp \colspace(\Theta_{\beta,\theta})$.
    \end{itemize}
\end{theorem}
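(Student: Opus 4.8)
The plan is to recast identifiability as injectivity of the data-generating map and then derive injectivity from a full-column-rank condition on its differential. Define $F:\mathcal B\times\mathcal T\to[0,1]^N$ by $F(\beta,\theta)=\bigl(h_1(\beta^\top z_1,\theta),\dots,h_N(\beta^\top z_N,\theta)\bigr)$. The parameters are uniquely identified from the data exactly when $F$ is injective on $\mathcal B\times\mathcal T$, since then the observed vector $(y_i)_{i=1}^N=F(\beta,\theta)$ pins down $(\beta,\theta)$. So I would begin by supposing $F(\beta,\theta)=F(\tilde\beta,\tilde\theta)$ and aim to conclude $(\beta,\theta)=(\tilde\beta,\tilde\theta)$.

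The central computation is the Jacobian of $F$. Differentiating the $i$-th coordinate via the chain rule (its first argument being $\beta^\top z_i$) gives the block form
\[
  DF(\beta,\theta)=\begin{bmatrix} B_{\beta,\theta}Z & \Theta_{\beta,\theta}\end{bmatrix}\in\R^{N\times(D+L)},
\]
with $B_{\beta,\theta}$, $\Theta_{\beta,\theta}$, $Z$ as in the statement. Since each $h_i$ is strictly increasing in its first argument (as assumed throughout for such decision functions), $B_{\beta,\theta}$ is diagonal with strictly positive entries, hence invertible; therefore $\colspace(B_{\beta,\theta}Z)$ has dimension $\rank(B_{\beta,\theta}Z)=D$ by the rank condition, and $\colspace(\Theta_{\beta,\theta})$ has dimension $L$, again by the rank condition. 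The independence (perpendicularity) condition forces these two subspaces of $\R^N$ to intersect only at the origin, so $\rank\,DF(\beta,\theta)=D+L$ for every $(\beta,\theta)$: $F$ is an immersion. By the inverse/constant-rank function theorem, $F$ is then locally injective at every point, so the preimage $F^{-1}(y)$ is discrete, and by compactness of $\mathcal B\times\mathcal T$ it is finite.

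The harder step, which I expect to be the main obstacle, is to promote local injectivity to global injectivity, i.e. to rule out two \emph{distinct} parameter vectors producing the same data; this is where the compactness and simple-connectedness of $\mathcal B\times\mathcal T$ and the monotonicity of the $h_i$ must all be used. One route is a path argument: join $(\beta,\theta)$ to $(\tilde\beta,\tilde\theta)$ by a path $\gamma$ in the (simply connected) parameter space and apply the fundamental theorem of calculus coordinatewise to $F\circ\gamma$; when the domain is convex the straight segment works and yields $\bigl(\int_0^1 DF(\gamma(s))\,ds\bigr)\cdot\bigl(\tilde\beta-\beta,\,\tilde\theta-\theta\bigr)=0$, reducing the problem to showing that the \emph{averaged} Jacobian still has full column rank — its $\beta$-block remains of the form $(\text{positive diagonal})\times Z$, so the content is that the averaged $\theta$-block stays independent of $\colspace(Z)$. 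A second, more hands-on route uses monotonicity to eliminate $\beta$: for fixed $\theta$, strict monotonicity forces $\beta^\top z_i=v_i(\theta):=h_i(\cdot,\theta)^{-1}(y_i)$ for every $i$, and $\rank(Z)=D$ then determines $\beta$ uniquely whenever the consistency requirement $v(\theta)\in\colspace(Z)$ holds; identification then reduces to showing $\{\theta\in\mathcal T: v(\theta)\in\colspace(Z)\}$ is a singleton, which one attacks by checking that this set is closed and discrete — the independence condition makes $\theta\mapsto P\,v(\theta)$ have injective derivative there, $P$ being the orthogonal projection onto $\colspace(Z)^\perp$ — and then invoking connectedness of the simply connected compact $\mathcal T$. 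Carrying the ``independence'' hypothesis through an average (or along a path) and handling the boundary of the parameter set are the delicate points; by contrast, the immersion property falls out of the rank and independence conditions essentially by linear algebra.
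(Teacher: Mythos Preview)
Your setup and Jacobian computation are exactly what the paper does: it defines the same map $F_N$ and shows that $DF_N(\beta,\theta)=\begin{bmatrix}B_{\beta,\theta}Z & \Theta_{\beta,\theta}\end{bmatrix}$ has trivial nullspace precisely when the rank and independence conditions hold (the paper phrases this as ``the Jacobian is invertible,'' meaning left-invertible). So the immersion part of your argument is correct and coincides with the paper.

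Where you diverge is the passage from local to global injectivity. The paper does not attempt a path or averaging argument; it invokes Hadamard's global inverse function theorem (Krantz--Parks, Theorem~6.2.8): a proper map with nowhere-vanishing Jacobian into a simply connected target is a global diffeomorphism. The paper restricts the codomain of $F_N$ to its image $\mathcal F$, argues that $\mathcal F$ is simply connected (as the continuous image of the simply connected set $\mathcal B\times\mathcal T$), checks properness, and is done. This is the ``missing idea'' in your proposal: a named theorem that converts the everywhere-full-rank Jacobian plus the topological hypotheses directly into global invertibility, without any averaging.

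Your proposed alternatives have real gaps. The averaged-Jacobian route fails because the independence condition is \emph{pointwise}: $\colspace(B_{\beta,\theta}Z)\perp\colspace(\Theta_{\beta,\theta})$ at each $(\beta,\theta)$ does not force the path-averaged $\Theta$-block to stay out of $\colspace(Z)$ (or even out of the column space of the averaged $\beta$-block), and you correctly flag this as ``delicate'' but offer no mechanism to resolve it. Your second route --- invert in the first argument, then study $\{\theta:v(\theta)\in\colspace(Z)\}$ --- relies on strict monotonicity of each $h_i$ in its first argument, which is \emph{not} assumed in this general theorem (it is assumed only for the specific $h$ in the main text); and even granting it, showing that the candidate set is both discrete and connected still requires something like the Hadamard/covering-space machinery you are implicitly reinventing. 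The clean fix is to replace both routes with the Hadamard theorem.
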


\begin{lemma}\label{lem:localinv}
Define the observation map $F_N:\mathcal B\times \mathcal T \to \mathbb R^N$ as
\[F_N(\beta, \theta) = \begin{bmatrix}
h_1( \beta^\top z_1, \theta)\\
\vdots \\ 
h_N( \beta^\top z_N, \theta)
\end{bmatrix}\]
The Jacobian of $F_N$ is invertible at $\beta,\theta$ if and only if the rank and independence conditions hold.
\end{lemma}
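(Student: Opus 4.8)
The plan is to compute the Jacobian of $F_N$ explicitly and then recognize that the rank and independence conditions are exactly what is needed for this matrix to be injective (hence invertible as a linear map from the parameter space into $\mathbb{R}^N$, once we account for the target dimension being $D+L$). First I would differentiate $F_N$ with respect to the stacked parameter vector $(\beta,\theta)\in\mathbb{R}^{D+L}$. Using the chain rule on each coordinate $h_i(\beta^\top z_i,\theta)$, the derivative with respect to $\beta$ is $h_i'(\beta^\top z_i,\theta)\, z_i^\top$ and the derivative with respect to $\theta$ is $\nabla_\theta h_i(\beta^\top z_i,\theta)^\top$. Stacking over $i=1,\dots,N$, the Jacobian is the $N\times(D+L)$ block matrix
\[
J_{F_N}(\beta,\theta) = \begin{bmatrix} B_{\beta,\theta} Z & \Theta_{\beta,\theta}\end{bmatrix},
\]
where $B_{\beta,\theta}Z$ contributes the first $D$ columns and $\Theta_{\beta,\theta}$ the last $L$ columns, matching the matrices defined in~\eqref{eq:identification-mx-appendix}. (Here I read ``Jacobian is invertible'' as ``$J_{F_N}$ has full column rank $D+L$,'' which is the meaningful notion when $N \ge D+L$; this is the sense in which local invertibility of the parameter-to-observation map holds.)

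Next I would argue the equivalence. For the ``if'' direction: suppose $J_{F_N}(\beta,\theta)(v,w) = 0$ for some $v\in\mathbb{R}^D$, $w\in\mathbb{R}^L$, i.e. $B_{\beta,\theta}Z\, v + \Theta_{\beta,\theta}\, w = 0$. The first term lies in $\colspace(B_{\beta,\theta}Z)$ and the second in $\colspace(\Theta_{\beta,\theta})$; by the independence (perpendicularity) condition, a sum of a vector from each subspace can be zero only if each is zero, so $B_{\beta,\theta}Z\, v = 0$ and $\Theta_{\beta,\theta}\, w = 0$. Then the rank conditions $\rank(B_{\beta,\theta}Z) = D$ and $\rank(\Theta_{\beta,\theta}) = L$ force $v = 0$ and $w = 0$, so the Jacobian has trivial kernel, i.e. full column rank. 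For the ``only if'' direction: if the rank condition fails, say $\rank(B_{\beta,\theta}Z) < D$, there is a nonzero $v$ with $B_{\beta,\theta}Z\, v = 0$, so $(v,0)$ is in the kernel and the Jacobian is not full rank (similarly for $\Theta_{\beta,\theta}$); and if the independence condition fails, there are nonzero $u\in\colspace(B_{\beta,\theta}Z)$ and $u'\in\colspace(\Theta_{\beta,\theta})$ with $u$ not perpendicular to $u'$ — more directly, the two column spaces intersect nontrivially or jointly span a space of dimension less than $D+L$, which again produces a nonzero kernel vector. I should phrase the independence-failure case carefully: the cleanest statement is that $\colspace(B_{\beta,\theta}Z)\cap\colspace(\Theta_{\beta,\theta}) \neq \{0\}$ yields a relation $B_{\beta,\theta}Z\,v = \Theta_{\beta,\theta}\,w \neq 0$, hence $(v,-w)$ is a nonzero kernel element.

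The main obstacle I anticipate is not the linear algebra, which is routine, but pinning down the precise meaning of ``the Jacobian is invertible'' given that $F_N$ maps into $\mathbb{R}^N$ rather than $\mathbb{R}^{D+L}$, and ensuring the statement is used correctly downstream: the real content is that $J_{F_N}$ is injective everywhere on the (compact, simply connected) parameter set, which combined with a global argument — e.g. a Hadamard-type global inverse function theorem, or the fact that an everywhere-injective-Jacobian map on a simply connected domain with appropriate properness is globally injective — yields unique identification of $(\beta,\theta)$ from $\{z_i,y_i\}$. I would make sure the bookkeeping between the perpendicularity formulation (as stated) and the weaker ``$\Theta_{\beta,\theta}\notin\colspace$'' formulation used in Proposition~\ref{prop:identify} is consistent, since the latter is the $L=1$ specialization where perpendicularity of a line to a subspace is not literally required — only non-containment — so the general theorem's perpendicularity hypothesis is in fact stronger than needed and I should be careful to only claim what the proof actually delivers, namely non-degeneracy of the combined column span.
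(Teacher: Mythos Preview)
Your approach is essentially identical to the paper's: compute the Jacobian as the block matrix $[B_{\beta,\theta}Z \;\; \Theta_{\beta,\theta}]$, then argue trivial kernel by using the independence condition to split the sum $B_{\beta,\theta}Zv + \Theta_{\beta,\theta}w = 0$ into two separate vanishing terms and the rank conditions to force $v=0$, $w=0$, with the converse obtained by exhibiting explicit kernel elements when either condition fails. Your caution about the perpendicularity hypothesis is well-placed and in fact sharper than the paper---the paper's ``only if'' argument tacitly uses nontrivial intersection of the two column spaces rather than mere failure of orthogonality, so the necessary condition for full column rank is really $\colspace(B_{\beta,\theta}Z)\cap\colspace(\Theta_{\beta,\theta})=\{0\}$, which perpendicularity implies but is not equivalent to.
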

\begin{proof}
Denote by $J$ the Jacobian of $F_N$. Then
\[J = \begin{bmatrix}\nabla h_1(\beta^\top z_1,\theta) \\ \vdots \\\nabla g_N(\beta^\top z_N,\theta)  \end{bmatrix}= \begin{bmatrix}B_{\beta,\theta}X & \Theta_{\beta,\theta}\end{bmatrix}\:.\]
The Jacobian $J$ is invertible if and only if the nullspace of $J$ contains only zero.

We first argue that the rank and independence conditions are sufficient. Suppose that $Jv=0$ for some $v$.
Letting $v=[v_1,v_2]$, this is equivalent to $BXv_1 + \Gamma v_2=0$.
Notice these terms are elements of $\colspace(BX)$ and $\colspace(\Gamma)$ respectively. 
By the independence condition, it must be that $BXv_1=0$ and $\Gamma v_2=0$.
By the rank condition and the rank-nullity theorem, it must be that $v_1=0$ and $v_2=0$.
Thus the rank and independence conditions imply that $J$ is invertible.

We now show that the rank and independence conditions are necessary.
If the independence condition does not hold, there is some nonzero $u$ 
such that $u=BXv_1=\Gamma v_2$.
Then $v=[v_1,-v_2]\neq 0$ is in the nullspace of $J$ so $J$ is not invertible.
If either $DX$ or $\Gamma$ is not full rank, then a nonzero element of their nullspace can be used to construct a nonzero element of the nullspace of then $J$.
This concludes the proof.
\end{proof}

\begin{proof}[Proof of Proposition~\ref{thm:identify_gen}]
Define $\mathcal F\subseteq \mathbb R^N$ as the image of the map $F_N$
defined in Lemma~\ref{lem:localinv}.
With some abuse of notation, we will now consider the function $F_N:\mathcal B \times \mathcal G\to \mathcal F$.
Identifiability of the parameters $\beta$ and $\gamma$ is equivalent to global invertibility of the function $F_N$.
We will use a Theorem due to Hadamard~\cite[Theorem 6.2.8]{krantz2002implicit} 
which states that
$F_N$ is globally invertible if it is proper, if the Jacobian never vanishes, and if $\mathcal F$ is simply connected.

$F_N$ is proper because each $h_i$ is proper and $Z$ is full rank.
Since $\mathcal F$ is the image of a simply connected space under a continuous mapping, it is also simply connected.
Finally, by Lemma~\ref{lem:localinv}, the Jacobian of $F_N$ is everywhere invertible under the rank and independence conditions.\end{proof}

\end{document}